\renewcommand{\phi}{\varphi}
\newcommand{\sign}[1]{\mathrm{sgn}(#1)}
\newtheorem{definition}{Definition}
\newtheorem{remark}{Remark}
\newtheorem{proposition}{Proposition}
\title{An end-to-end deep learning approach for extracting stochastic dynamical systems with $\alpha$-stable L\'evy noise}
\author[a]{Cheng Fang \thanks{fangcheng1@hust.edu.cn}}
\author[a]{Yubin Lu \thanks{yubin\_lu@hust.edu.cn}}
\author[a]{Ting Gao \thanks{Corresponding author: tgao0716@hust.edu.cn }}
\author[b]{Jinqiao Duan \thanks{duan@iit.edu}}
\affil[a]{School of Mathematics and Statistics \& Center for Mathematical Sciences, Huazhong University of Science and Technology, Wuhan 430074, China}
\affil[b]{Department of Applied Mathematics, College of Computing, Illinois Institute of Technology, Chicago, IL 60616, USA}
\begin{document}
\date{June 3, 2022}
\maketitle

\begin{abstract}
Recently, extracting data-driven governing laws of dynamical systems through deep learning frameworks has gained a lot of attention in various fields. Moreover, a growing amount of research work tends to transfer deterministic dynamical systems to stochastic dynamical systems, especially those driven by non-Gaussian multiplicative noise. However, lots of log-likelihood based algorithms that work well for Gaussian cases cannot be directly extended to non-Gaussian scenarios which could have high error and low convergence issues. In this work, we overcome some of these challenges and identify  stochastic dynamical systems driven by $\alpha$-stable L\'evy noise from only random pairwise data. Our innovations include: (1) designing a deep learning approach to learn both drift and diffusion coefficients for L\'evy induced noise with $\alpha$ across all values, (2) learning complex multiplicative noise without restrictions on small noise intensity, (3) proposing an end-to-end complete framework for stochastic systems identification under a general input data assumption, that is, $\alpha$-stable random variable. Finally, numerical experiments and comparisons with the non-local Kramers-Moyal formulas with moment generating function confirm the effectiveness of our method.  
\par\textbf{Keywords: }Stochastic dynamical system, neural network, multiplicative noise, L\'evy motion, log-likelihood
\end{abstract}

\textbf{Stochastic differential equations are commonly employed to describe phenomena in many application fields. To better analyze the intrinsic dynamics of the system, it is important to get good approximations of the vector fields and noise types in the stochastic systems from real observation data.  This kind of data-driven problem is usually investigated in various optimization ways. However, most of these methods are limited to observation data under Gaussian noise, which cannot be applied to many real-world scenarios such as climate change, genetic transcription, finance crisis, etc., where the data contains jumping fluctuations or obeys distributions deviating from the general Gaussian assumption. To solve this problem, we construct a novel approach to extract stochastic governing laws from observation data under $\alpha$-stable L\'evy noise.  Our method is general, adaptable, and capable of dealing with a wide range of situations, especially large and multiplicative noise. Various experimental results are presented considering different types of vector fields and noise situations.}

\section{Introduction}
Stochastic dynamical systems, described by stochastic differential equations (SDEs), are widely used to describe various natural phenomena in Physics, Biology, Economics, Ecology, etc. When there is a lack of scientific understanding of complex phenomena or mathematical models based on the governing laws are too complex, the usual analytical process is difficult to work. Fortunately, with the development of observation technology and computing power, a great deal of valuable observation data can be obtained from the above situation. Further, a lot of data-driven problems arise in terms of identifying stochastic governing laws for different types of noises.  Therefore, it is important to investigate accurate and efficient methods to solve unknown coefficients or functions in complex SDE models.

The exploration of identifying stochastic dynamical systems usually focuses on models expressed by deterministic differential equations under Gaussian noise. For instance, Ruttor \cite{Ruttor2013ApproximateGP} applies Gaussian process prior over the drift coefficient and develops the Expectation-Maximization algorithm to deal with latent dynamics between observations. Dai \cite{Dai2020DetectingTM} leverages Kramers–Moyal formulas and Extended Sparse Identification algorithms for nonlinear dynamics to obtain SDEs coefficients and maximum likelihood transition pathways. Klus \cite{Klus2019DatadrivenAO} derives a data-driven method for the approximation of the Koopman operator which is appropriate to identify the drift and diffusion coefficients of stochastic differential equations from real data. Some variational approaches are also used to approximate the distribution over the unknown paths of the SDE conditioned on the observations and approximate the intractable likelihood of drift \cite{Opper2019VariationalIF}. There are also other data-driven methods for this purpose, including but not limited to Bayesian inference \cite{Garca2017NonparametricEO}, sparse identification \cite{Brunton2016DiscoveringGE} and so on. The emergence of deep learning, thanks to the recent fast improvement of computing power, has made great progress in many application fields such as computer vision, language modeling and signal processing. Since the major mathematical formulations of these problems are optimization, it is natural to deal with the inverse problems such as system identification by some deep learning framework, through which lots of research work has nowadays been carried out. For example, Dietrich \cite{Dietrich2021LearningES} approximates the drift and diffusivity functions in the effective SDE through effective stochastic ResNets \cite{He2016DeepRL}. Xu and Darve \cite{Xu2021SolvingIP} leverage a discriminative neural network for computing the statistical discrepancies which can learn the model’s unknown parameters and distributions that is inspired by GAN \cite{Goodfellow2014GenerativeAN}. Dridi \cite{Dridi2021LearningSD} proposes a novel approach where parameters of the unknown model are represented by a neural network integrated with SDE scheme. Ryder \cite{Ryder2018BlackBoxVI} uses variational inference to jointly learn the parameters and the diffusion paths, and then introduce a recurrent neural network to approximate the posterior for the diffusion paths conditional on the parameters. Neural ordinary differential equation (NODE) \cite{Chen2018NeuralOD} and its stochastic expansions (NSDE) (\cite{Li2020ScalableGF} \cite{Jia2019NeuralJS} \cite{Tzen2019NeuralSD} \cite{Norcliffe2021NeuralOP}) also strive to describe the evolution of the system for continuous time intervals which can be applied to many application fields.

However, due to the fact that real-world observation data can usually have various jumps or bursts, it is more suitable to model them as stochastic dynamic systems driven by non-Gaussian fluctuations, e.g., L\'evy flights. For example, L\'evy motions can be used to describe random fluctuations that appear in the oceanic fluid flows \cite{Woyczynski2001LvyPI}, gene networks \cite{Cai2017LvyNE}, biological evolution \cite{Jourdain2012LvyFI}, finance \cite{Nolan2003ModelingFD} and geophysical systems \cite{Yang2020TheTT}, etc.  All these indicate that stochastic dynamical systems with L\'evy motions are more appropriate to model real-world phenomena scientifically. Therefore, increasing amounts of research work on such kind of data-driven problems are rising recently. For example, through the non-local Kramers-Moyal formulas, 
Li \cite{Li2021ExtractingSD} analytically represents $\alpha$-stable L\'evy jump measures, drift coefficients, and diffusion coefficients using either the transition probability density or the sample paths, which can be achieved by normalizing flows or other basis function based machine learning methods \cite{Li2020ADA} \cite{Lu2021ExtractingSG} \cite{Li2021ExtractingGL}. Another way is to learn SDE’s coefficients through neural networks instead of learning of the corresponding nonlocal Fokker-Planck equations \cite{Chen2021SolvingIS}. In addition, generalizing the Koopman operator into non-Gaussian noise also allows the coefficients of the stochastic differential equations to be estimated \cite{Lu2020DiscoveringTP}.

In this present work, our goal is to explore an effective data-driven approach to learn stochastic dynamical systems under $\alpha$-stable L\'evy noise. Our work focuses heavily on the distribution characteristics of the data and includes three major contributions: First, we design a two-step hybrid neural network structure to learn both drift and diffusion coefficients under L\'evy induced noise with $\alpha$ across all values, while some methods like nonlocal Kramers-Moyal formulas with moment generating function can only handle the case when $\alpha$ is larger than 1; Second, we can learn complex multiplicative noise without any pre-requirements on small noise intensity, which is often the case for many existing methods; Third, we propose an end-to-end complete framework for stochastic systems identification under a comparatively general prior distribution assumption, that is, input data is supposed to be a $\alpha$-stable random variable.

The paper is organized as follows. In Section \ref{Problem setting}, we introduce some background knowledge including stochastic differential equations driven by $\alpha$-stable L\'evy motions and the corresponding deep learning based method directly extended from the Brownian motion case. After checking the challenging issues with extended formulas, we present our proposed algorithms in detail in Section \ref{Research framework}. Then, in Section \ref{experiments}, we give out representative experiment results of stochastic differential equations with additive or multiplicative $\alpha$-stable L\'evy noise and compare them with the non-local Kramers-Moyal method with moment generating function which has restrictions on additive noise and stability parameter. Under special assumptions, we approximate the drift and diffusion coefficients of a two-dimensional Maier-Stein model. Finally, we conclude the advantages and future work of this research paper. Moreover, many detailed explanations and results are presented in Appendices \ref{alpha rv}, \ref{drift trick} and \ref{other experiments}, as well as properties of $\alpha$-stable random variables, data preprocessing tricks ,richer experimental results and so on.

\section{Problem setting} \label{Problem setting}

\subsection{Stochastic differential equations driven by $\alpha$-stable L\'evy noise} \label{sde setting}
We consider a special but important class of L\'evy motions, $\alpha$-stable L\'evy motions. The notation $S_\alpha (\sigma, \beta, \gamma)$ represents an $\alpha$-stable random variable with four parameters: an index of stability $\alpha \in (0,2]$ also called the tail index, tail exponent or characteristic exponent, a skewness parameter $\beta \in [-1,1]$, a scale parameter $\sigma > 0$ and a location parameter $\gamma \in \mathbb{R}^1$. See Appendix \ref{alpha rv} for more details.

\begin{definition} \label{levy motion def}
Defined  a probability space $(\Omega, \mathcal{F}, \mathit{P})$, a symmetric $\alpha$-stable scalar L\'evy motion $L_t^{\alpha}$, with $0<\alpha<2$, is a stochastic process with the following properties:
\begin{itemize}
    \item $L_0^{\alpha}=0$, a.s.;
    \item $L_t^{\alpha}$ has independent increments;
    \item $L_t^{\alpha}-L_s^{\alpha} \sim S_\alpha ((t-s)^{\frac{1}{\alpha}}, 0, 0)$;
    \item $L_t^{\alpha}$ is stochastically continuous, i.e., for all $\delta > 0$ and for all $s \ge 0$
    \begin{equation*}
        \lim_{t \rightarrow s} \mathit{P}(|L_t^{\alpha}-L_s^{\alpha}| > \delta) = 0. 
    \end{equation*}
\end{itemize}
\end{definition}

The $\alpha$-stable L\'evy motions $L_t^{\alpha}$ in $\mathbb{R}^d$ can be similarly defined. In the L\'evy-Khintchine formula \cite{duan2015introduction}, $(b,\mathit{Q},\nu_\alpha)$ is the (generating) triplet for the $\alpha$-stable L\'evy motion $L_t^{\alpha}$ which represents drift vector, covariance matrix and jump measure, separately. Usually, we consider the pure jump case $(0, 0, \nu_\alpha)$.

Stochastic differential equations (SDEs) are differential equations involving noise. Based on L\'evy-It\^o decomposition \cite{duan2015introduction}, L\'evy motion generally includes three parts, that is, a drift term, a diffusion term induced by Brownian motion and another diffusion term with pure jump noise. In this paper, we consider autonomous stochastic differential equations (non-autonomous equations can be treated as autonomous systems with one additional time dimension) with $\alpha$-stable L\'evy motions and build algorithms to discover their corresponding stochastic governing laws. Here we consider the following SDE:
\begin{equation}
    dX_t=f(X_t)dt+g(X_t)dL^{\alpha}_t, \label{LDE}
\end{equation}
where $X_t\in \mathbb{R}^d$ and $f(X_t)\in \mathbb{R}^d$ is the drift coefficient, $g(X_t)\in \mathbb{R}^d\times\mathbb{R}^d$ is the noise intensity, and $dL^{\alpha}_t=[dL^{\alpha}_1(t),...,dL^{\alpha}_d(t)]^T$ is composed of $d$ mutually independent one-dimensional symmetric $\alpha$-stable L\'evy motions with triplet $(0,0,\nu_\alpha)$. In the triplet, the jump measure $\nu_\alpha(dx) = C(1,\alpha) \parallel x \parallel^{-1-\alpha}dx$ for $x \in \mathbb{R}^1\setminus \{0\}$, and the intensity constant
\begin{equation*}
    C(1, \alpha) = \dfrac{\alpha\Gamma((1 +\alpha)/2)}{2^{1-\alpha}\pi^{1/2}\Gamma(1-\alpha/2)}.
\end{equation*}
For this $\alpha$-stable L\'evy motion, we see that its components $L^{\alpha}_i(t) \sim S_\alpha(t^{\frac{1}{\alpha}},0,0)$,\ $i=1,\ 2,\ \dots,\ d$. 

Note that for the multi-dimensional case, we only consider the diffusion coefficient $g(X_t)$ to be a $d \times d$ square diagonal matrix under elementary transformation, with non-negative elements of the main diagonal on the domain of $X_t$, that is, $g_{ii}(X_t) \ge 0$ for $i=1,2,\dots,d$. Now for more general case when $m$ denotes the dimension of L\'evy motion and $m \neq d$, we have all the off-diagonal elements of $d \times m$ matrix $g(X_t)$ being zero under elementary transformation. We can add or remove zero vectors to construct a $d \times d$ matrix. Some reasons for this assumption can be found in Section \ref{multi_dim}.

\subsection{Challenges of traditional log-likelihood approach} \label{challenge}

The Euler-Maruyama scheme is a method to approximate (\ref{LDE}) over a small time interval $h > 0$:
\begin{equation}
    X_n= X_{n-1} + hf(X_{n-1}) + g(X_{n-1})L_h^{\alpha},\quad n=1,\ 2,\ \dots, \label{E-M LDE}
\end{equation}
where $L_h^{\alpha}$ is a $d$-dimensional random vector, the components of it are mutually independent and $\alpha$-stable distributed, i.e., $L^{\alpha}_i(h) \sim S_\alpha(h^{\frac{1}{\alpha}},0,0)$, $i=1,\ 2,\ \dots,\ d$. This method can be derived from stochastic Taylor expansion. The convergence and convergence order of (\ref{E-M LDE}) for $h \rightarrow 0$ have been studied at length.

Assuming we can only use a set of $N$ snapshots $D = \{(x_1^{(k)},x_0^{(k)},h^{(k)})\}_{k=1}^N$ where $x_0^{(k)}$ are points scattered in the state space of (\ref{LDE}) and the value of $x_1^{(k)}$ results from the evolution of (\ref{LDE}) under a small time-step $h^{(k)} > 0$, which starts at $x_0^{(k)}$.
~\\

\textbf{Log-likelihood method for SDEs driven by Brownian motions and $\alpha$-stable L\'evy motions:}

Likelihood estimation in combination with the Gaussian distribution ($\alpha=2$) is used in many variational and generative approaches (\cite{Goodfellow2014GenerativeAN} \cite{Kingma2014AutoEncodingVB} \cite{Li2020ScalableGF} \cite{Opper2019VariationalIF} \cite{Dai2020DetectingTM} \cite{Ryder2018BlackBoxVI}). Based on the Euler-Maruyama discretization method (\ref{E-M LDE}), we can construct a loss function for training two neural networks to approximate $f$ and $g$ in (\ref{E-M LDE}), denoted as $f_\theta$ and $g_\theta$. Taking advantage of the properties of the Gaussian distribution, the probability of $x_1$ conditioned on $x_0$ and $h$ is given by:
\begin{equation}
    x_1 \sim  \mathcal{N}(x_0+hf(x_0), hg(x_0)^2). \label{x1 gaussian distribution}
\end{equation}

Now we utilize the training data $D$ in terms of triples $(x_1^{(k)},x_0^{(k)},h^{(k)})$ to approximate drift $f$ and diffusion $g$. By defining the probability density $p_\theta$ of the Gaussian distribution (\ref{x1 gaussian distribution}), we could obtain the neural networks $f_\theta$ and $g_\theta$ through maximizing log-likelihood of the data $D$ under the assumption in equation (\ref{x1 gaussian distribution}):
\begin{equation}
    \theta := \arg \max_{\hat{\theta}} \mathbb{E} [\log p_{\hat{\theta}}(x_1|x_0,h)] \approx \arg \max_{\hat{\theta}}\frac{1}{N}\sum_{k=1}^{N}\log p_{\hat{\theta}}(x_1^{(k)}|x_0^{(k)},h^{(k)}). \label{theta def} 
\end{equation}
Applying the logarithm of the probability density function of the Gaussian distribution combined with the parameters represented by neural networks, we could construct the loss function as
\begin{equation} \label{mle gaussian loss}
    \mathcal{L}(\theta|x_1,x_0,h):= \frac{1}{N}\sum_{k=1}^{N} \frac{(x_1^{(k)} - (x_0^{(k)}+hf_\theta(x_0^{(k)})))^2}{2hg_\theta(x_0^{(k)})^2} + \frac{1}{2N} \sum_{k=1}^{N} \log|hg_\theta(x_0^{(k)})^2| + \frac{1}{2}\log (2\pi).
\end{equation}
Minimizing $\mathcal{L}$ in (\ref{mle gaussian loss}) over the data $D$ is equivalent to maximization of the log marginal likelihood (\ref{theta def}). After training, the neural network outputs $\hat{f}_\theta$, $\hat{g}_\theta$ are what we want. 

Now we can extend the above procedure directly to the stochastic differential equation driven by non-Gaussian noise (\ref{LDE}), for example, consider the dataset generated by the following equation using the discretization method (\ref{E-M LDE}):
\begin{equation} \label{experiment of only mle}
    dX_t= (-X_t+1)dt+0.1dL^{\alpha}_t
\end{equation}
where $\alpha=1.5$, $X_t \in \mathbb{R}^1$. By applying the negative log-likelihood function as loss function:
\begin{equation} \label{mle failure levy loss}
    \mathcal{L}(\theta|x_1,x_0,h):= -\frac{1}{N}\sum_{k=1}^{N}\log p_{\theta}(x_1^{(k)}|x_0^{(k)},h^{(k)}),
\end{equation}
where $p_{\theta}$ represents the probability density function of the $\alpha$-stable random variable whose concrete form and derivation can be found in Section \ref{levy structure}. The training results are shown in Figure \ref{fig:only mle loss}.

\begin{figure}[H]
  \centering
  \subfigure[The approximation results of SDE (\ref{experiment of only mle}), (left) true and approximated drift coefficients, (right) true and approximated diffusivity coefficients]{
    \label{fig:subfig:onefunction} 
    \includegraphics[scale=0.48]{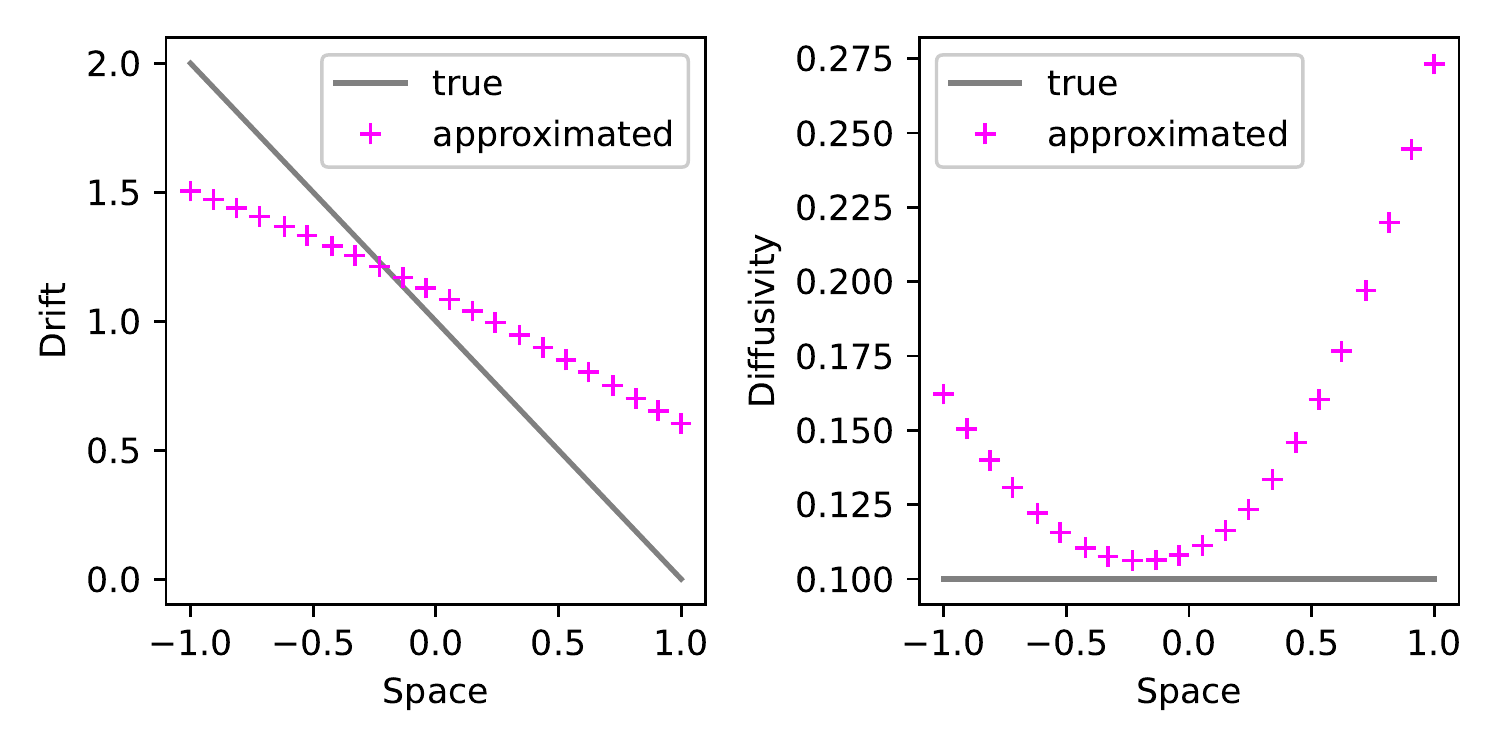}}
  \hspace{0.1in} 
  \subfigure[Loss function]{
    \label{fig:subfig:onefunction loss} 
    \includegraphics[scale=0.4]{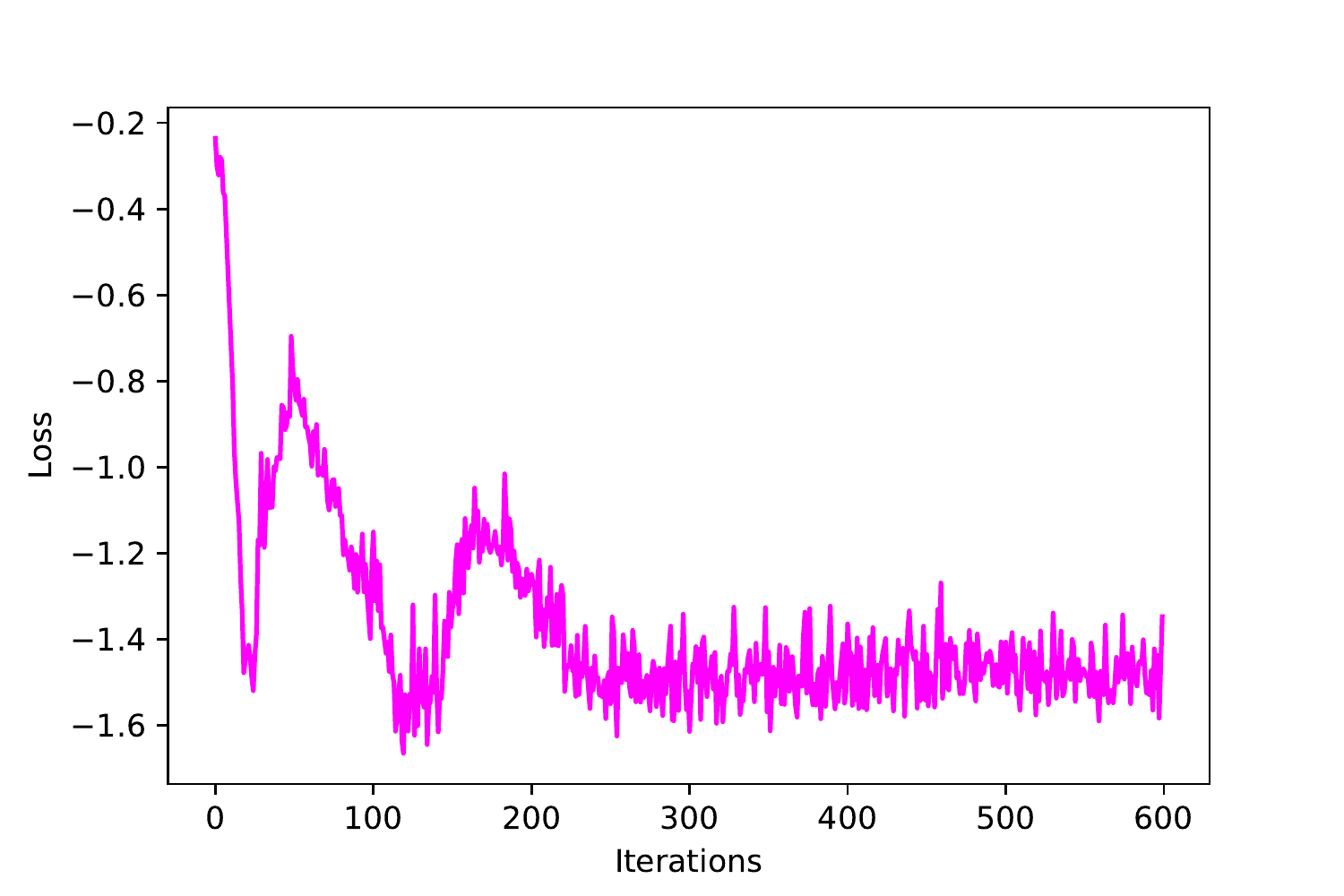}}
  \caption{The estimated results of drift and diffusion coefficients, and corresponding loss. (a) Comparison of ground truth and approximation of drift and diffusion coefficients. (b) Loss over iterations shows training convergence.}
  \label{fig:only mle loss} 
\end{figure}

It is shown in Figure \ref{fig:only mle loss} that even the training loss is decreasing and convergent, the approximation of drift and diffusion coefficients is still far from accurate. Therefore, log-likelihood function as a loss function is not enough for the identification of $\alpha$-stable L\'evy noise driven stochastic differential equations. Now we present our framework in details in the next section for solving this issue.

\section{Research framework} \label{Research framework}

In this section,  we investigate how to discover stochastic dynamics driven by $\alpha$-stable L\'evy noise. Here we propose a deep learning algorithm to extract the stochastic differential equations as the form (\ref{LDE}) from samples. It is important to note that our research focuses on both
additive and multiplicative noise, without limited requirements on small noise intensity. Before introducing our methodology, we give the workflow of our constructed framework (Figure \ref{workflow}).

\begin{figure}[H] 
\centering
\includegraphics[width=14cm]{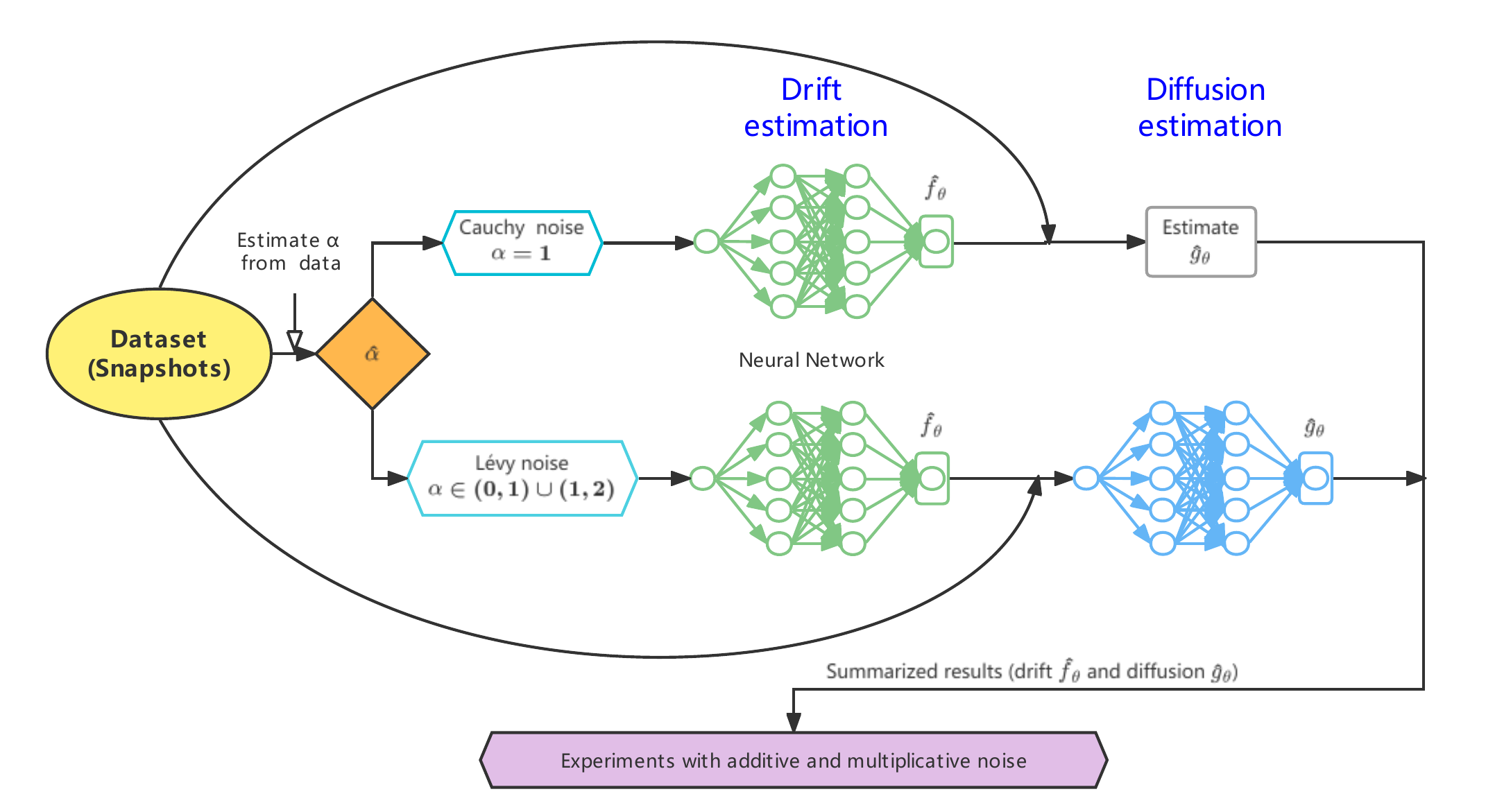}
\caption{The workflow of our proposed framework. The estimation of stability parameter $\alpha$ is briefly introduced in Section \ref{MCMC moment generating}. After determining whether it is Cauchy noise ($\alpha = 1$), different structures are used to identify the dynamic behavior, see details in Section \ref{proposed method}. Experiments with stochastic differential equations driven by additive or multiplicative noise are carried out in Section \ref{experiments}.}
\label{workflow}
\end{figure}

\subsection{Estimation of stability parameter $\alpha$} \label{MCMC moment generating}

In Section \ref{sde setting}, we know that $\alpha$-stable random variables have stability parameter $\alpha$. The stability parameter estimation is not a new research topic and has been widely studied. Therefore, a couple of techniques have been investigated for stability parameter estimation, such as quantile method \cite{mcculloch1986simple}, characteristic function method \cite{Ilow1998ApplicationsOT}, maximum likelihood method \cite{Nolan2001MaximumLE}, extreme value method, fractional lower order moment method \cite{Bhaskar2010VideoFD}, method of log-cumulant \cite{Nicolas2011IntroductionTS}, characteristic function based analytical approach \cite{Bibalan2017CharacteristicFB}, Markov chain Monte Carlo with Metropolis-Hastings algorithm\cite{Hao2011ParameterEO}, moment generating function method \cite{Nolan2013MultivariateEC}\cite{Li2021ExtractingSD}. 

We attempt to estimate $\alpha$ using Markov chain Monte Carlo with Metropolis-Hastings algorithm and moment generating function method: when the stability parameter equals to 1.5, i.e., $\alpha = 1.5$, Markov chain Monte Carlo with Metropolis-Hastings algorithm (MCMC with MH) shows that $\alpha$ converges to about $1.4930$ (Figure \ref{fig:mcmc with mh}); and the result of direct calculation of the moment generating function is $1.5161$. Due to the validity of the above methods, the stability parameter $\alpha$ is assumed to be known in subsequent studies.

\begin{figure}[H]
  \centering
  \subfigure[Iteration results]{
    \label{fig:subfig:mcmc with mh} 
    \includegraphics[scale=0.48]{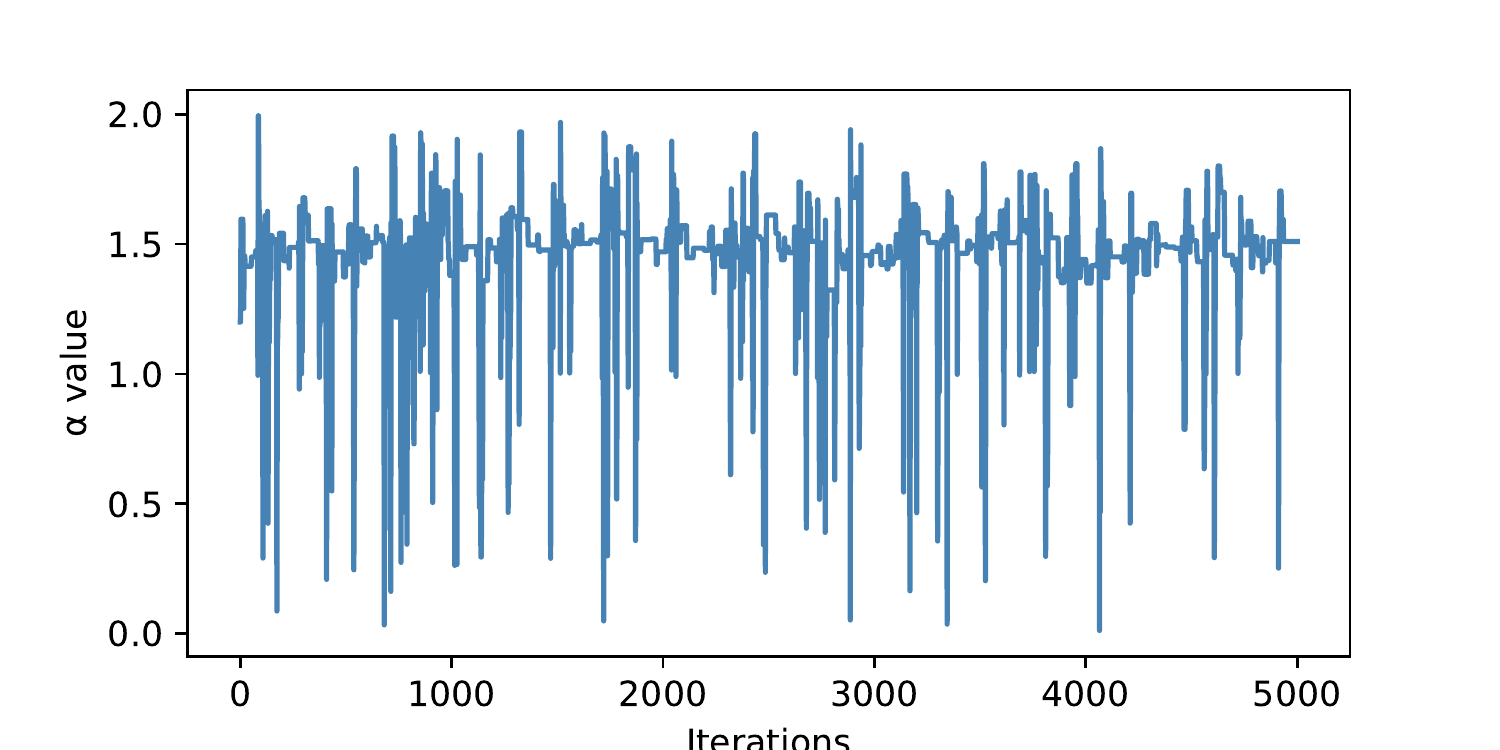}}
  \hspace{0.1in} 
  \subfigure[Histogram of the $\alpha$ value frequency number]{
    \label{fig:subfig:histogram} 
    \includegraphics[scale=0.48]{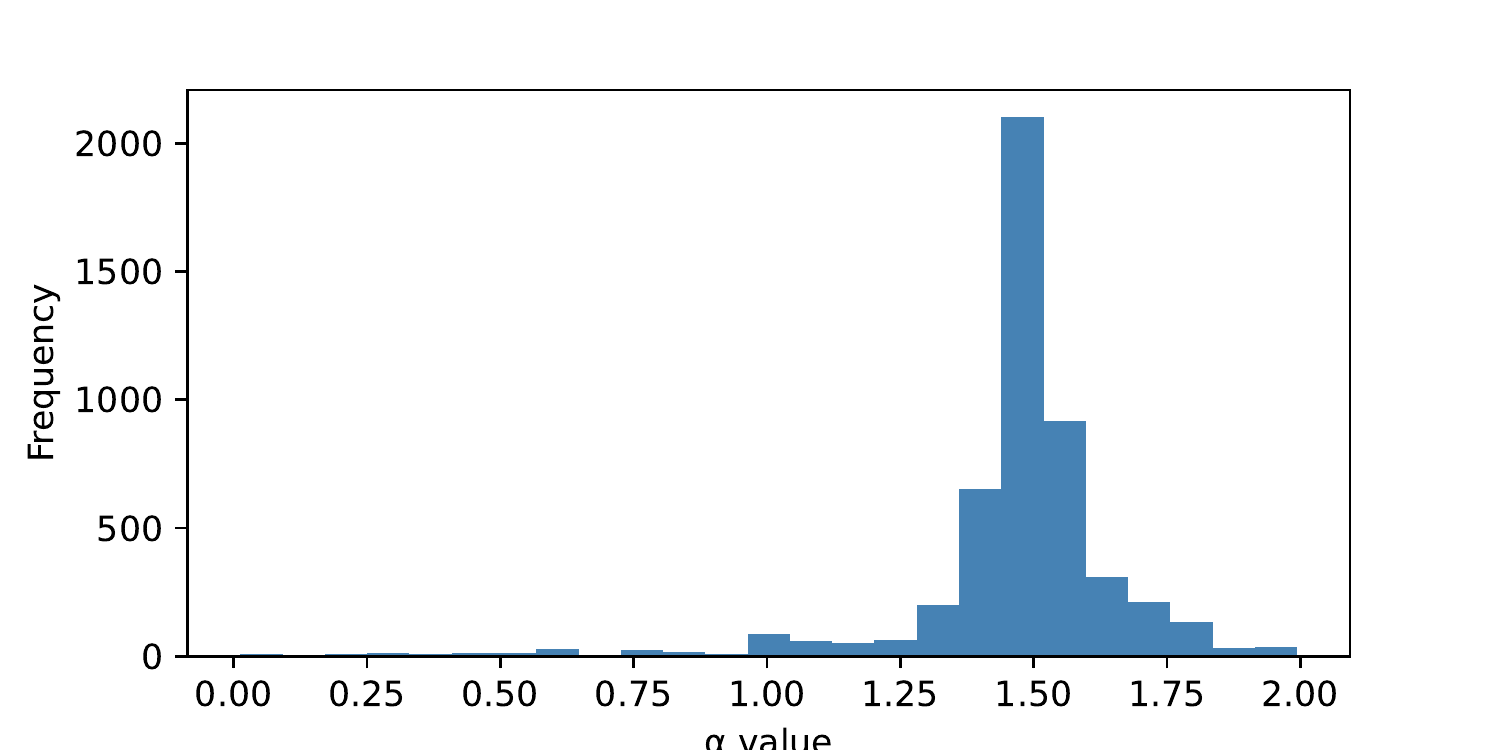}}
  \caption{For the true $\alpha = 1.5$, the iterative result of  Markov chain Monte Carlo with Metropolis-Hastings algorithm (a) and a histogram of the $\alpha$ frequency number (b). Both iteration and histogram show that the $\alpha$ value is around $1.5$.}
  \label{fig:mcmc with mh} 
\end{figure}

\subsection{Challenge issues explanation} \label{Log-likelihood estimation analysis }

Here we continue to use the data structure mentioned in Section \ref{challenge}, i.e., a set of $N$ snapshots $D = \{(x_1^{(k)},x_0^{(k)},h^{(k)})\}_{k=1}^N$. Since we are thinking about autonomous systems, these samples can be viewed as starting at the same time in the state space and passing through the same time-step $h$, or they can be viewed as taking data pairs from a long trajectory $\{x_{t_i}\}$ of (\ref{LDE}) with sample frequency $h_i > 0$, that is, $t_{i+1}=t_i+h_i$. Note that, the step size $h^{(k)}$ is defined for each snapshot,  so it may have different values for every index $k$ and every task. The main research is to discover drift $f$ and diffusion (diffusivity or noise intensity) $g$ through two neural networks $f_\theta: \mathbb{R}^d \rightarrow \mathbb{R}^d$ and $g_\theta:\mathbb{R}^d \rightarrow \mathbb{R}^d\times\mathbb{R}^d$, parameterized by their weights and bias $\theta$, only from the data in $D$. We restrict the discussion to the case $d = 1$ for our main work in Section \ref{proposed method} and multi-dimensional cases could be explained and discussed in Section \ref{multi_dim}. Our initial idea is to utilize log-likelihood estimation to construct the loss function, that is, minimizing loss function implies maximization of the log-likelihood function. However, as can be seen at the end of Section \ref{challenge}, log-likelihood alone is not enough to serve as the loss function of neural networks, and we give the following explanation. 

~\\

\textbf{Log-likelihood function in a dilemma}:

\emph{First}, we discuss the problems encountered in the case of maximal likelihood estimation in the face of the Cauchy distribution, i.e., stable parameter $\alpha=1$. Maximum likelihood can also be used to estimate the location parameter $\gamma$ and the scale parameter $\sigma$ in Table \ref{tab:closed form}. If a set of i.i.d. samples of size $N$ is taken from a Cauchy distribution ($S_1(\sigma,0,\gamma)$), the log-likelihood function is:
\begin{equation}
    \mathcal{L}(z_1,\dots,z_N|\sigma,\gamma)=\frac{1}{N}\sum_{k=1}^N p_{\alpha=1}(z_k|\sigma,\gamma)=-\log(\sigma \pi)-\frac{1}{N} \sum_{k=1}^N \log \Big(1+\Big(\frac{z_k-\gamma}{\sigma}\Big)^2\Big).
\end{equation}
Maximizing the log-likelihood function with respect to $\gamma$ and $\sigma$ by taking the first derivative produces the following system of equations:
\begin{equation}
    \begin{split}
        \frac{d\mathcal{L}}{d\gamma} &= \sum_{k=1}^N \frac{2(z_k-\gamma)}{\sigma^2+(z_k-\gamma)^2} = 0,\\
        \frac{d\mathcal{L}}{d\sigma} &= \sum_{k=1}^N \frac{2(z_k-\gamma)^2}{\sigma(\sigma^2+(z_k-\gamma)^2)} - \frac{N}{\sigma}= 0.
    \end{split}
\end{equation}
We can see that solving $\gamma$ requires handling a polynomial of degree $2n-1$, and solving $\sigma$ requires computing a polynomial of degree $2n$. This tends to be complicated by the fact that this requires finding the roots of a high degree polynomial, and there can be multiple roots that represent local maximum \cite{Ferguson1978MaximumLE}. In \cite{Zhangwenzhong1991counterexamplesinprobabilitystatistics}, the authors give a counterexample that the likelihood equation has multiple roots and the maximum likelihood estimator may not be unique. They also show that the maximum likelihood estimator can only be approximated by numerical methods which makes it very inconvenient to apply the usual maximum likelihood estimator method to Cauchy distribution. Meanwhile, because the expectation and variance of Cauchy distribution don't exist, the usual moment estimation method is not suitable for the parameter estimation of Cauchy distribution.

\emph{Then}, since the general $\alpha$-stable distribution doesn't have a closed-form of its probability density function (Appendix \ref{alpha rv}), it's natural to wonder the rationality of the maximum log-likelihood function as a loss function. We follow the initial loss function and construction in Section \ref{challenge}, that is, only the negative log-likelihood function is used as the loss function. An experiment of SDE (\ref{experiment of only mle}) with training results of neural networks is shown in the Figure \ref{fig:only mle loss}.

Unsurprisingly, as discussed in the Cauchy noise case, the results of simply using the log-likelihood function as a loss are poor. We give the following explanations: the probability density functions of $\alpha$-stable distributions are too complex, the maximum likelihood estimation falls into local extremes; estimating two unknown quantities with a function is inherently uncertain. On the other hand, we cannot completely negate the log-likelihood function as a loss function. Looking at Figure \ref{fig:only mle loss}, it can be found that the estimated results of the drift coefficient $f_\theta$ and the diffusion coefficient $g_\theta$ are better and worse at the same time. Intuitively, the maximum likelihood estimator needs an auxiliary method to determine its value to prevent the optimization process of the maximum likelihood estimator from falling into local minimums. 

To sum up, the log-likelihood function is not a panacea. Constructing the loss function solely by maximizing likelihood function is sufficient for the Brownian noise case and small noise, but not for $\alpha$-stable L\'evy noise case. We will discuss how to build neural networks and loss functions in different stability parameter scenarios.

\subsection{Two-step hybrid method (1-D)} \label{proposed method}
In previous section, we see problems with log-likelihood functions for $\alpha$-stable L\'evy case, hence here we explore the solutions through a two-step learning framework.

\subsubsection{A special case --- Cauchy} \label{cauchy structure}
Let us begin with the special case of identifying stochastic dynamical systems with Cauchy noise ($\alpha=1$).

Inspired by \cite{TheparameterestimateofWangbingzhang2021Cauchydistribution}, the least absolutely deviation(LAD) estimation is advocated by Laplace which has the advantage of robustness. The introduction of the robustness requirement is one of the major advances in statistics in the second half of the 20th century. By definition, the least absolutely deviation estimate is the value $\gamma$ that minimizes the mean absolute deviation:
\begin{equation}
  H(\gamma)=\sum_{k=1}^N|z_k-\gamma|,  \label{LAD}
\end{equation}
where $(z_1,z_2,...,z_N)$ represent the $N$ i.i.d. samples of Cauchy distribution with parameters $\gamma$ and $\sigma$. If we find the least absolutely deviation of the parameter $\gamma$, denoted as $\hat{\gamma}_N$, the parameter $\sigma$ can be estimated as follows:
 \begin{equation} 
     \hat{\sigma}_N=\frac{1}{2}\Big[\frac{1}{N}\sum_{k=1}^N\sqrt{|z_k-\hat{\gamma}_N|}\Big]^2. \label{Cauchy sigma}
 \end{equation}
Note that $\hat{\sigma}_N$ is calculated directly by the formula (\ref{Cauchy sigma}). Moreover, the parameters $\hat{\gamma}_N$ and $\hat{\sigma}_N$ obtained by the above method are strong consistency estimator, i.e., $\hat{\gamma}_N$ and $\hat{\sigma}_N$ converge to $\gamma$ and $\sigma$ with probability 1. Thus, we propose our learning schemes in the following two steps.

\textbf{Step 1:} Corresponding to our task, for the drift coefficient $f_\theta$, we construct a loss function similar to (\ref{LAD}): 
\begin{equation} \label{cauchy loss1}
    \begin{split}
        \mathcal{L}(\theta|x_1,x_0,h) = \sum_{k=1}^N|x_1^{(k)}-(x_0^{(k)} + hf_\theta(x_0^{(k)}))|.
    \end{split}
\end{equation}
By minimizing the loss function (\ref{cauchy loss1}), we obtained the drift item $\hat{f}_\theta$ we expected. 

\textbf{Step 2: }As for the diffusion coefficient $g_\theta$, it can be obtained by direct calculation of the analog of formula (\ref{Cauchy sigma}):
\begin{equation} \label{cauchy loss2}
    \hat{g}_\theta(x_0^{(i)})=\frac{1}{2}\Big[\frac{1}{N_i}\sum_{k_i=1}^{N_i}\sqrt{|x_1^{(k_i)}-(x_0^{(i)} + h\hat{f}_\theta(x_0^{(i)}))|}\Big]^2, \ for\  i = 1, 2, \dots, n,
\end{equation}
where $N_1+N_2+\dots+N_n = N$.

\subsubsection{General cases when $\alpha \in (0,2) \setminus \{1\}$} \label{levy structure}

In this section, we will find that the log-likelihood function plays a key role in training. Since it contains the probability density function, we will first introduce the probability density function of the $\alpha$-stable random variable we used.

Although there are three probability density functions available (Appendix \ref{alpha pdf cf}), as they all contain integrals or infinite series, it is difficult to maximize the log-likelihood function. After programming experiments, Zolotarev formula is the most stable and accurate one. However, because of the truncation error from approximated summation, other two functions cannot give us appropriate approximations when state space is wide and density is comparatively smooth.

\textbf{Zolotarev formula}:
The probability density function of $X \sim S_\alpha(1,0,0)$ is:
\begin{equation}  \label{pdf3 in main body}
    p(x;\alpha, \beta=0, \sigma=1, \gamma=0)=
    \left\{
    \begin{array}{lr}
    \dfrac{\alpha x^{\frac{1}{\alpha-1}}}{\pi|\alpha-1|}\int_0^{\frac{\pi}{2}}V(\theta;\alpha,\beta=0)\exp\{-x^{\frac{1}{\alpha-1}}V(\theta;\alpha,\beta=0)\}d\theta,\  for\ x > 0,\\
    \\
    \dfrac{\Gamma(1+\frac{1}{\alpha})}{\pi},\  for\ x = 0,\\
    \\
    p(-x;\alpha, -\beta=0, \sigma=1, \gamma=0),\  for\ x < 0,\\
    \end{array}
    \right.
\end{equation}
where $V(\theta;\alpha,\beta=0)=\Big(\dfrac{\cos\theta}{\sin\alpha\theta}\Big)^{\frac{\alpha}{\alpha-1}}\cdot\dfrac{\cos\{(\alpha-1)\theta\}}{\cos\theta}$. A description of other forms of probability density functions is in the Appendix \ref{alpha pdf cf}. Unless otherwise specified, the $\alpha$-stable probability density function we generally use is the Zolotarev formula (\ref{pdf3 in main body}, \ref{pdf3 }). We introduce a proposition that has important implications for subsequent derivations:

\begin{proposition}
(i) If $X \sim S_\alpha(\sigma,\beta,\gamma)$ and $a$ is a real constant, then $X + a \sim S_\alpha(\sigma,\beta,\gamma + a)$.\\
(ii) If $X \sim S_\alpha(\sigma,\beta,\gamma)$ and $k$ is a real constant, then 
\begin{equation}
    kX \sim 
    \left\{
    \begin{array}{lr}
    S_\alpha(|k|\sigma,\sign k\beta,k\gamma),\  \alpha \neq 1,\\
    \\
     S_1(|k|\sigma,\sign k\beta,k\gamma-\frac{2}{\pi}k(\log|k|)\sigma\beta),\  \alpha=1.\\
    \end{array}
    \right.
\end{equation}

In particular, if $X \sim S_\alpha(1,0,0)$ and $k$ is a real constant, then $kX \sim S_\alpha(|k|,0,0)$, for $\alpha \in (0,2)$.\\
(iii) If $X \sim S_\alpha(\sigma,\beta,0)$, then $-X \sim S_\alpha(\sigma,-\beta,0)$, for $\alpha \in (0,2)$.\\
(iv) If $X_1$, $X_2$ are independent stable random variables with the same distribution $S_\alpha(\sigma,\beta,\gamma)$ for $\alpha \neq 1$ and $A$, $B$ are positive constants, then
\begin{equation*}
    AX_1+BX_2 \sim S_\alpha(\sigma(A^\alpha+B^\alpha)^{\frac{1}{\alpha}},\beta,\gamma(A+B)).
\end{equation*}
\label{alpha-stable rv properties}
\end{proposition}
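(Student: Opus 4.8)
The plan is to establish all four assertions from the characteristic function of an $\alpha$-stable random variable, which is the standard and cleanest tool for this type of computation. Recall (Appendix~\ref{alpha rv}) that for $X \sim S_\alpha(\sigma,\beta,\gamma)$ one has
\begin{equation*}
  \EX[e^{itX}] =
  \begin{cases}
    \exp\!\big\{-\sigma^\alpha |t|^\alpha\big(1 - i\beta\,\sign{t}\tan\tfrac{\pi\alpha}{2}\big) + i\gamma t\big\}, & \alpha \neq 1,\\[6pt]
    \exp\!\big\{-\sigma |t|\big(1 + i\beta\,\tfrac{2}{\pi}\sign{t}\log|t|\big) + i\gamma t\big\}, & \alpha = 1.
  \end{cases}
\end{equation*}
Since the characteristic function determines the law, in each part it suffices to compute the characteristic function of the transformed variable and read off its parameters by matching it to one of the two templates above.

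Part (i) is immediate: $\EX[e^{it(X+a)}] = e^{ita}\,\EX[e^{itX}]$, which only inserts the factor $e^{ita}$, i.e.\ replaces $\gamma$ by $\gamma+a$ while leaving $\sigma$ and $\beta$ fixed; this is valid for $\alpha=1$ and $\alpha\neq1$ alike. For part (ii) I would substitute $kt$ for $t$: $\EX[e^{it(kX)}] = \EX[e^{i(kt)X}]$. When $\alpha\neq1$, using $|kt|^\alpha = |k|^\alpha|t|^\alpha$ and $\sign{kt} = \sign{k}\,\sign{t}$ one rewrites the exponent as $-(|k|\sigma)^\alpha|t|^\alpha\big(1 - i(\sign{k}\beta)\sign{t}\tan\tfrac{\pi\alpha}{2}\big) + i(k\gamma)t$, giving $kX\sim S_\alpha(|k|\sigma,\sign{k}\beta,k\gamma)$. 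The case $\alpha=1$ is the only place any care is needed: there $\log|kt| = \log|k| + \log|t|$, and after using the identity $|k||t|\,\sign{k}\,\sign{t} = kt$ the term carrying $\log|k|$ becomes $-i\,\tfrac{2}{\pi}k(\log|k|)\sigma\beta\,t$, which is linear in $t$ and hence a correction to the location parameter; collecting the remaining terms yields the stated $S_1(|k|\sigma,\sign{k}\beta,k\gamma - \tfrac{2}{\pi}k(\log|k|)\sigma\beta)$. The displayed special case follows by setting $\sigma=1,\beta=0,\gamma=0$, which annihilates both the skewness and (when $\alpha=1$) the logarithmic correction, so $kX\sim S_\alpha(|k|,0,0)$ for all $\alpha\in(0,2)$.

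Part (iii) is the specialization $k=-1$ of part (ii): then $|k|=1$, $\sign{k}=-1$, and $k\gamma=0$ because $\gamma=0$; moreover, even for $\alpha=1$ the extra term vanishes since $\log|-1|=0$, so $-X\sim S_\alpha(\sigma,-\beta,0)$ for every $\alpha\in(0,2)$. For part (iv), independence of $X_1,X_2$ gives $\EX[e^{it(AX_1+BX_2)}] = \EX[e^{it(AX_1)}]\,\EX[e^{it(BX_2)}]$; applying part (ii) to each factor is legitimate because $A,B>0$ (so $\sign{A}=\sign{B}=1$) and because $\alpha\neq1$ there is no logarithmic correction. The product of the two exponentials then has exponent $-\sigma^\alpha(A^\alpha+B^\alpha)|t|^\alpha\big(1 - i\beta\,\sign{t}\tan\tfrac{\pi\alpha}{2}\big) + i\gamma(A+B)t$, from which one reads off scale $\sigma(A^\alpha+B^\alpha)^{1/\alpha}$, unchanged skewness $\beta$, and location $\gamma(A+B)$.

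The only genuine obstacle is the bookkeeping around the $\alpha=1$ logarithmic drift correction in part (ii); every other step is a routine substitution in the characteristic function. If one preferred to avoid characteristic functions entirely, one could instead argue from the Zolotarev representation~(\ref{pdf3 in main body}) by change of variables for (ii)--(iii) and by a convolution computation for (iv), but the characteristic-function route handles all parameter ranges uniformly and is considerably shorter.
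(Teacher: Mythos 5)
Your characteristic-function argument is correct in all four parts, including the delicate $\log|k|$ bookkeeping that produces the location shift $-\tfrac{2}{\pi}k(\log|k|)\sigma\beta$ in the $\alpha=1$ case of (ii). The paper itself gives no proof but simply cites Samorodnitsky and Taqqu, Chapter 1, where precisely this characteristic-function computation is carried out, so your proposal supplies the standard argument the paper defers to.
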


\begin{proof}
See \cite{Samorodnitsky1995StableNR}, Chapter 1.
\end{proof}

Hence, if $p_{\alpha}(x;\sigma,\beta,\gamma)$ is the probability density function of the stable random variable $X \sim S_\alpha(\sigma,\beta,\gamma)$, then $p_{\alpha}(x;\sigma,\beta,\gamma+a)$ is the probability density function of $X+a$ (for every real constant $a$) and $p_{\alpha}(x;\sigma A,\beta,\gamma A)$ is the probability density function of $AX$ (for every positive constant $A$ and $\alpha \ne 1$).

Similar to the Gaussian noise, it also starts from Euler-Maruyama discretization method (\ref{E-M LDE}) for training the networks $f_\theta$ and $g_\theta$. Essentially, conditioned on $x_0$ and $h$, we can think of $x_1$ as a point extracted from a $\alpha$-stable distribution:
\begin{equation}
    x_1 \sim  S_\alpha(g(x_0)h^{1/\alpha},0,x_0+hf(x_0)), \label{x1 distribution}
\end{equation}
where we use the argument (i) and (ii) in Proposition \ref{alpha-stable rv properties} and the fact $L^{\alpha}_h \sim S_\alpha(h^{\frac{1}{\alpha}},0,0)$. As mentioned in Zolotarev fomula (\ref{pdf3 in main body}, \ref{pdf3 }), we introduce the probability density function of $X \sim S_\alpha(1, 0 , 0),\ \alpha \in (0,2) \setminus \{1\}$. In order to get the probability density function of (\ref{x1 distribution}), we need a simple derivation.

\begin{remark} \label{remark}
If $X \sim S_\alpha(1,0,0)$ has probability density function $p_\alpha(x)$, $b$, $A>0$ are constants, then $b+AX \sim S_\alpha(A,0,b)$ has the probability density function $\frac{1}{A} p_\alpha(\frac{x-b}{A})$. This comes from the fact that $\mathbb{P}(b+AX \le x) = \mathbb{P}(X \le \frac{x-b}{A}) = \int_{-\infty}^{\frac{x-b}{A}}p_\alpha(\xi)d\xi$ and $\frac{d}{dx}\mathbb{P}(b+AX \le x) = \frac{1}{A}p_\alpha(\frac{x-b}{A})$.
\end{remark}

After preparing the required probability density function and its transformation formula, we first need to construct an auxiliary measure for the log-likelihood function. In our method, it manifests as the first step in the two-step estimation method, i.e., estimating drift coefficient $f$.

\textbf{Step 1: }Observation of the $x_1$ distribution (\ref{x1 distribution}) shows that the drift coefficient $f$ is essentially related to the location parameter of the $\alpha$-stable random variable. Taking inspiration from improvements to Cauchy estimation, we choose to first estimate the drift $f$ or the location parameter of $x_1$ distribution using a neural network. From a machine learning perspective, the estimation of the Cauchy drift coefficient $f_\theta$ can essentially be considered as Mean Absolute Error (MAE). After testing, we find that the Mean square Loss (MSE) converges faster than MAE and the results are better. For the drift neural network $f_\theta$, we use the following loss function:
\begin{equation}
    \mathcal{L}(\theta|x_1,x_0,h) = \frac{1}{N}\sum_{k=1}^N[x_1^{(k)}-(x_0^{(k)} + hf_\theta(x_0^{(k)}))]^2. \label{general drift loss}
\end{equation}
Minimizing the above loss function (\ref{general drift loss}), we obtain the desirable drift estimation $\hat{f}_\theta$. 

\textbf{Step 2: }Since the estimation of the drift $\hat{f}_\theta$ assists the log-likelihood function, we can now formulate the log-likelihood loss function that will be minimized to obtain the neural network weights $\theta$ for diffusion network $g_\theta$. The log-likelihood formula of the data set $D$ is no change in form (\ref{theta def}) except the notion $p_\theta$ means the probability density function of (\ref{x1 distribution}). If we bring in the probability density function (\ref{pdf3 in main body}) and apply the conclusion from Remark \ref{remark}, we can get a loss function of the analogous form (\ref{mle gaussian loss}). We note that the specific form (\ref{x1 gaussian distribution}) and (\ref{x1 distribution}) of the transition probabilities $p_\theta$ induced by the Euler-Maruyama method implies that it is only the conditional probability given $x_0$ and $h$. So we need data pairs such as $(x_0, x_1)$, and the conclusion that the conditional distribution of $x_1$ after multi-step evolution is still a $\alpha$-stable distribution is often wrong. 

The logarithm of the probability density function of the $\alpha$-stable distribution, together with four parameters from (\ref{x1 distribution}), yields the loss function to minimize, that is,
\begin{equation} \label{mle loss}
\begin{split}
    \mathcal{L}(\theta|x_1,x_0,h):&= -\frac{1}{N}\sum_{k=1}^{N}\log p_{\theta}(x_1^{(k)}|x_0^{(k)},h)\\
    &= -\frac{1}{N}\sum_{k=1}^{N}\log \Big[ \frac{1}{g_\theta(x_0^{(k)})h^{1/\alpha}}p_\alpha \Big(\frac{x_1^{(k)}-(x_0^{(k)}+hf_\theta(x_0^{(k)}))}{g_\theta(x_0^{(k)})h^{1/\alpha}}\Big)\Big]\\
    &= \frac{1}{N}\sum_{k=1}^{N}\log(g_\theta(x_0^{(k)})h^{1/\alpha}) - \frac{1}{N}\sum_{k=1}^{N} \log \Big[p_\alpha \Big(\frac{x_1^{(k)}-(x_0^{(k)}+hf_\theta(x_0^{(k)}))}{g_\theta(x_0^{(k)})h^{1/\alpha}}\Big)\Big]
\end{split}
\end{equation}
where $p_\alpha$ represents the probability density function of the $\alpha$-stable random variable $S_\alpha(1,0,0)$. Minimizing $\mathcal{L}$ in (\ref{mle loss}) over the data $D$ is equivalent to maximization of the log-likelihood function (\ref{theta def}).

Using the estimated drift coefficient $\hat{f}_\theta$, bring in to (\ref{mle loss}) and minimize, we can get the desired diffusion estimation $\hat{g}_\theta$. Such a two-step training method avoids weight adjustment between loss functions on the one hand and gives the log-likelihood function an auxiliary estimation on the other hand.

\subsection{Multi-dimensional case} \label{multi_dim}

We continue with the setting in Section \ref{sde setting}, i.e., diffusion coefficient $g(X_t)$ is a $d \times d$ diagonal matrix and the $d$-dimensional $\alpha$-stable L\'evy motion is composed of $d$ mutually independent one-dimensional symmetric $\alpha$-stable L\'evy motions. Based on these assumptions, we can generalize our algorithm to multi-dimensional stochastic differential equations.

\textbf{Step 1:} Compared to Step 1 of Section \ref{levy structure}, we rewrite the loss function (\ref{general drift loss}) to the average of $d$ dimensions:
\begin{equation}
    \mathcal{L}(\theta|x_1,x_0,h) = \frac{1}{Nd} \sum_{k=1}^N \sum_{i=1}^d[x_{1,i}^{(k)}-(x_{0,i}^{(k)} + hf_{\theta,i}(x_{0}^{(k)}))]^2. \label{d-dim drift loss}
\end{equation}
where $x_j=(x_{j,1},x_{j,2},\dots,x_{j,d})^T$ for $j=0, 1$ and $f_\theta(x_0)=(f_{\theta,1}(x_0),f_{\theta,2}(x_0),\dots,f_{\theta,d}(x_0))^T$ are $d$-dimensional vectors, time step $h$ is a scalar value.

\textbf{Step 2:} After getting the estimated drift coefficient $\hat{f}_\theta$, the diffusion coefficient $g$ is the only thing left to figure out. Since $g$ is a diagonal matrix and the components of L\'evy motion are mutually independent, under conditional probability, we can write the joint distribution as the product of marginal distributions:
\begin{equation} \label{d-dim mutually independent levy pdf}
    p(x_1|x_0,h,f(x_0))  = p(x_{1,1}|x_0,h,f(x_0)) p(x_{1,2}|x_0,h,f(x_0)) \dots p(x_{1,d}|x_0,h,f(x_0)),
\end{equation}
where $x_1$, $x_0$ and $f(x_0)$ are $d$-dimensional vectors, time step $h$ is a scalar value. Take the logarithm of the equation (\ref{d-dim mutually independent levy pdf}):
\begin{equation} \label{log d-dim mutually independent levy pdf}
    \log p(x_1|x_0,h,f(x_0))  = \sum_{i=1}^d \log p(x_{1,i}|x_0,h,f(x_0)).
\end{equation}
Furthermore, we can briefly rewrite the formula (\ref{mle loss}): 
\begin{equation} \label{d-dim mle loss}
    \mathcal{L}(\theta|x_1,x_0,h,\hat{f}_\theta(x_0))  
    = -\frac{1}{N}\sum_{k=1}^{N}\sum_{i=1}^d \log p_{\theta}(x_{1,i}^{(k)}|x_0^{(k)},h,\hat{f}_\theta(x_0^{(k)})).
\end{equation}
The required diffusion coefficient $g$ is estimated by $\hat{g}_\theta$ through minimizing the formula (\ref{d-dim mle loss}).

Note that this generalization of $d$-dimensional SDEs is because the $d$-dimensional probability density function that can be written as the product of one-dimensional marginal density functions (\ref{d-dim mutually independent levy pdf}). Further, we can explain assumptions that $g(X_t)$ is a diagonal matrix and that the components of L\'evy motion are mutually independent in Section \ref{sde setting}. Without the assumption that $g(X_t)$ is a diagonal matrix, we consider a two-dimensional SDE and use the Euler-Maruyama scheme (\ref{E-M LDE}) represented as follows:
\begin{equation} \label{2d explain}
    \begin{aligned}
        X_1&= X_{0} + h f_1(X_{0},Y_{0}) + A_1 L_{h,1}^{\alpha} + A_2 L_{h,2}^{\alpha},\\
        Y_1&= Y_{0} + h f_2(X_{0},Y_{0}) + B_1 L_{h,1}^{\alpha} + B_2 L_{h,2}^{\alpha},
    \end{aligned}
\end{equation}
where constants $A_1,A_2,B_1,B_2 >0$ and $L_{h,1}^{\alpha},L_{h,2}^{\alpha}$ are independent and $\alpha$-stable distributed. Although the linear combination of independent $\alpha$-stable random variables is still an $\alpha$-stable random variable (Proposition \ref{alpha-stable rv properties}, (iv)), $X_1$ and $Y_1$ aren't independent, and the joint distribution of the random vector $(X_1, Y_1)^T$ cannot be expressed as the product of marginal distributions. Without the assumption that the components of L\'evy motion are mutually independent, the explicit probability density function of this two-dimensional random vector $(X_1, Y_1)^T$ is unknown. Besides that, we introduce another case of $d$-dimensional $\alpha$-stable random vectors. In \cite{Nolan2013MultivariateEC}, the author discussed the $d$-dimensional elliptically contoured $\alpha$-stable distribution whose probability density function can be computed through amplitude distribution and surface integral. However, its complicated probability density function leads to intractable approximations, which is beyond the scope of our article.

\section{Experiments} \label{experiments}

We illustrate the neural network identification techniques in the following examples, divided by different ranges of $\alpha$ value ($\alpha = 1$, $\alpha \in (1,2)$, $\alpha \in (0,1)$), the categories of noise (additive noise or multiplicative noise) and the dimensions of SDEs. We created snapshots by integrating the SDEs (\ref{LDE}) from $t = 0$ to $t = h$. To be specific, these samples come from simulating stochastic differential equation with initial value $x_0$ using Euler-Maruyama scheme, where $x_0 \in [-3,3]$, $x_0 \in [-1,1]$, $x_0 \in [-2.5,2.5]$ for different examples. 

In this article, we consider stochastic dynamical systems induced by medium noise or large noise, and noise very close to 0 is not considered (noise very close to 0 can be approximated as a deterministic system). Furthermore, we use a Softplus activation function before output $g_\theta$, to guarantee $g_\theta$ is always positive. In addition, we clipped the lower bound of output $g_\theta$ with $10^{-13}$, so that we won’t have loss go to infinity.

To avoid confusion, we post a list of experiments in Table \ref{tab:exper}. Since the drift coefficients and diffusion coefficients learned by the neural network are implicitly expressed, and the negative log-likelihood function is not necessarily greater than $0$ as the general loss function, we calculate the $\mathcal{L}^2$ error of the two coefficients in addition to the figures to explicitly quantify the experimental results: 
\begin{equation}
    \begin{aligned}
        \mathcal{L}_f^2 = \frac{1}{N} \sum_{k=1}^{N}(f(x_0^{(k)})-\hat{f}_\theta(x_0^{(k)}))^2,\ 
        \mathcal{L}_g^2 = \frac{1}{N} \sum_{k=1}^{N}(g(x_0^{(k)})-\hat{g}_\theta(x_0^{(k)}))^2.
    \end{aligned}
\end{equation}
The results can be seen in the last two columns of Table \ref{tab:exper}.

\begin{table}[H]
  \centering
  \caption{List of experiments (Experiments with font bolding can be found in this section and the remaining experiments are shown in Appendix \ref{other experiments})}
  \begin{tabular}{c|l|c|c|c|c}
    \hline
    Stability & \multirow{2}{*}{Noise type} & Drift & Diffusion & Error & Error\\
    parameter &  & coefficient $f$ & coefficient $g$ & $\mathcal{L}^2_f$ & $\mathcal{L}^2_g$\\
    \hline\hline
    
    \multirow{6}{*}{$\alpha=1$}& \multirow{3}{*}{additive} & $-X_t+1$  & $1$ & 0.0021 & 0.0004\\
    \cline{3-6}
    & &$\bm{-X_t^2}$ & $\bm{1}$ & \textbf{0.0020} & \textbf{0.0000} \\
    \cline{3-6}
    & &$\sin(X_t)$& $1$ & 0.0080 & 0.0017 \\
    \cline{2-6}
    & \multirow{3}{*}{multiplicative} & $-X_t+1$ & $0.1X_t+0.5$ & 0.0041 & 0.0147 \\
    \cline{3-6}
    & &$\bm{-X_t^2}$ & $\bm{0.1X_t+0.5}$ & \textbf{0.0344} & \textbf{0.0136}\\
    \cline{3-6}
    & &$\sin(X_t)$ & $0.1X_t+0.5$ & 0.0054 & 0.0173\\
    
    \hline
    \multirow{6}{*}{$\alpha=1.5$}& \multirow{4}{*}{additive} & $-X_t+1$ & $1$& 0.0038 & 0.0007\\
    \cline{3-6}
    & &$-X_t^3+X_t$ & $1$& 0.0010 & 0.0003\\
    \cline{3-6}
    & &$\log (X_t+1.5)$ & \multirow{2}{*}{$1$} & \multirow{2}{*}{0.0003} & \multirow{2}{*}{0.0008}\\
    & &$-|X_t+1.5|^{1/3}$ & &\\
    \cline{2-6}
    & \multirow{2}{*}{multiplicative} & $\bm{-X_t^3+X_t}$ & $\bm{X_t+1}$ & \textbf{0.0009} & \textbf{0.0007} \\
    \cline{3-6}
    & &$\bm{-X_t^3+X_t}$ & $\bm{\sin(\pi X_t)+1}$ & \textbf{0.0006} & \textbf{0.0108}\\
    
    \hline
    \multirow{4}{*}{$\alpha=0.5$}& \multirow{2}{*}{additive} & $-X_t+1$ & $1$& 0.0001 & 0.0045\\
    \cline{3-6}
    & &$-X_t^3+X_t$ & $1$& 0.0002 & 0.0010\\
    \cline{2-6}
    & \multirow{2}{*}{multiplicative} & $-X_t+1$ & $X_t+1$ &0.0001 & 0.0139\\
    \cline{3-6}
    & &$-X_t^3+X_t$ & $X_t+1$& 0.0002 & 0.0047\\
    \hline
    compared with nonlocal & \multirow{2}{*}{additive} & \multirow{2}{*}{$\bm{-X_t^3+4X_t}$} & \multirow{2}{*}{$\bm{1}$} &\multirow{2}{*}{\textbf{0.0014}} & \multirow{2}{*}{\textbf{0.0003}}\\ Kramers-Moyal & & & & &\\
    \hline
    \multirow{4}{*}{(2-D) $\alpha=1.5$}& \multirow{2}{*}{additive} & $\bm{X_t-X_t^3-X_t Y_t^2}$  & $\bm{1}$ & \multirow{2}{*}{\textbf{0.0014}}& \multirow{2}{*}{\textbf{0.0026}}\\
    \cline{3-4}
    & & $\bm{-(1+X_t^2)Y_t}$ & $\bm{1}$ & &\\
    \cline{2-6}
    & \multirow{2}{*}{multiplicative} & $X_t+Y_t$ & $0.5Y_t+1$ & \multirow{2}{*}{0.0024} & \multirow{2}{*}{0.0020}\\
    \cline{3-4}
    & & $4X_t-2Y_t$& $0.5X_t+1$ & &\\
    \hline
  \end{tabular}
  
  \label{tab:exper}
\end{table}

\subsection{SDEs driven by Cauchy motion} \label{cauchy experiments}

In this section, we test the special case of $\alpha=1$. The drift neural networks $f_\theta$ in our tests have two hidden layers and 25 neurons per layer, batch size of 100, 100 epochs, with exponential linear unit (ELU) activation function and the ADAMAX optimizer with default parameters. The diffusion coefficients $g_\theta$ are calculated by formula (\ref{cauchy loss2}). 

We present two representative examples in this section: (1) additive noise, i.e., $g(X_t)$ in (\ref{LDE}) is a constant, we assume $g(X_t) = 1$ without loss of generality; (2) multiplicative noise with a diffusion coefficient of $g(X_t) = 0.1X_t+0.5$. Both drift coefficients are $f(X_t)=-X_t^2$. Here we take sample size $N=10000$, $h = 0.01$, $x_0 \in [-3,3]$ for example (1). Example (2) follows the construction of example (1) except for sample size $N = 20$ (different $x_0$) $\times1000$ (sample size). By comparing the real coefficients with the learned coefficients in Figure \ref{fig:cauchy-add-mul}, we can see that they overlap very well.

\begin{figure}[H]
  \centering
  \subfigure[$f(X_t)=-X_t^2$, $\hat{g}_{\theta} = 0.9955 $]{
    \label{fig:subfig:cauchy-squa-add} 
    \includegraphics[scale=0.65]{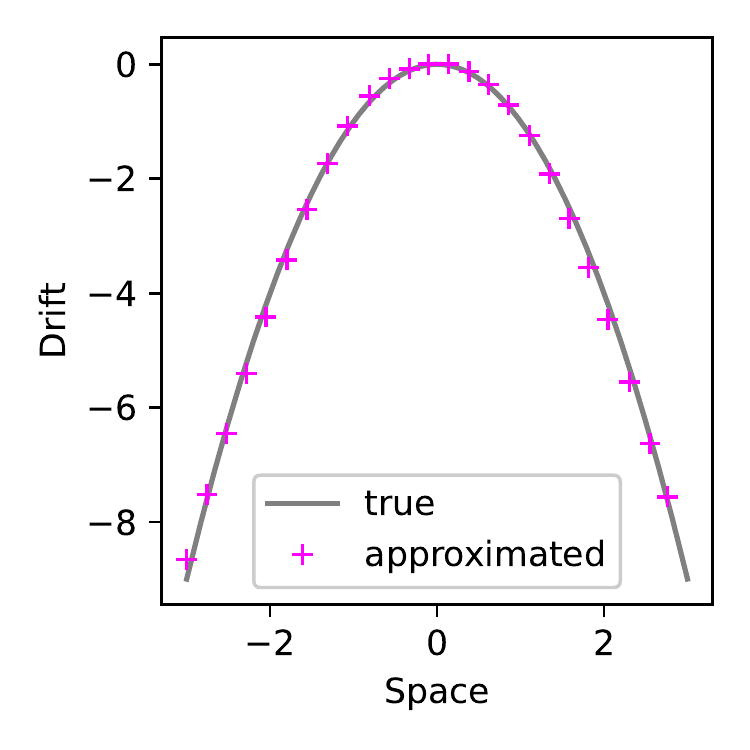}}
  \hspace{0.1in} 
  \subfigure[$f(X_t)=-X_t^2$, $g(X_t) = 0.1X_t+0.5$, (left) true and approximated drift coefficients, (right) true and approximated diffusivity coefficients]{
    \label{fig:subfig:cauchy-squa-mul} 
    \includegraphics[scale=0.65]{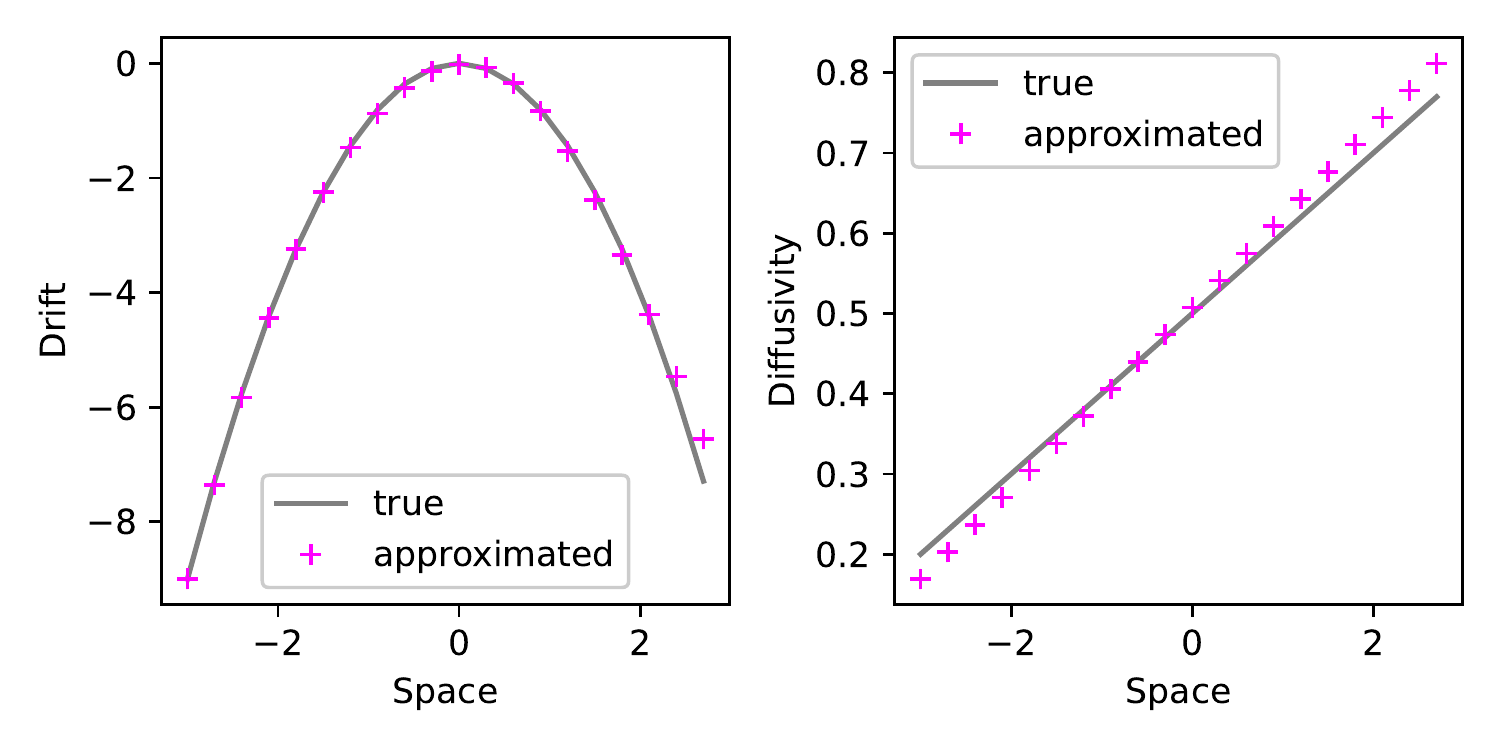}}
  \caption{Cauchy noise with two representative examples, the gray lines represent the true coefficients and the magenta plus markers are the neural networks' results. $\hat{g}_{\theta}$ represents the diffusion coefficient calculated by (\ref{cauchy loss2}).}
  \label{fig:cauchy-add-mul} 
\end{figure}

\subsection{SDEs driven by $\alpha$-stable L\'evy motion} \label{one dim levy experiments}

In this section, neural networks are used twice, separately to estimate drift and diffusion. The drift neural networks $f_\theta$ have three hidden layers and 25 neurons per layer, no batch normalization, 300 epochs, with exponential linear unit (ELU) activation function, and the ADAM optimizer with a learning rate of $0.005$ and other default parameters. The diffusion neural networks $g_\theta$ have two hidden layers and 25 neurons per layer, output layer with the Softplus function to ensure that the diffusion coefficients are positive, batch size of 512, no more than $30$ epochs, with exponential linear unit (ELU) activation function and the ADAMAX optimizer with a learning rate of $0.005$, eps $=10^{-7}$ and other default parameters.

We separately discuss $\alpha \in (1,2)$ and $\alpha \in (0,1)$. This is due to the fact that as $\alpha$ decreases, the probability distribution function presents a greater probability near the location parameter and a greater probability of the tail, which results in a difference in the accuracy of the estimated result.  Without loss of generality, we select $\alpha=1.5$ and $\alpha=0.5$. 

As described in Table \ref{tab:exper} and Appendix \ref{other experiments}, our experiments on drift $f$ focus on linear drift coefficients, double-well drift coefficients, and complex drift coefficients. Additive noise remains $g=1$. Multiplicative noise contains linear diffusion coefficients and trigonometric diffusion coefficients. We use a preprocessing trick to speed up training and improve estimation accuracy, see Appendix \ref{drift trick}. Different experimental parameters may vary slightly. We give an appropriate explanation for different step sizes. Indeed, take a glance at the loss function (\ref{general drift loss}), our goal is to find the optimal $f_\theta$ but it is affected by the step size, and too small a step size can make it difficult to estimate $f$. On the other hand, too large a time step can cause the precision of the Euler-Maruyama scheme to decrease. Therefore, there is a trade-off between them. For a smaller time step, bringing more samples into the training process is one way to achieve the same estimated precision.

We look at representative examples once more and provide suggestions for improvement. For $\mathbf{\alpha=1.5}$, we test the drift coefficients $f(X_t) = -X_t^3+X_t$, diffusion coefficient $g(X_t) = X_t+1$ and $g(X_t) = \sin (\pi X_t) + 1$ in equation (\ref{LDE}). Here we take sample size $N=50\times1000$, $h = 0.5$, $x_0 \in [-1,1]$. We compare the true coefficients and the learned coefficients in Figure \ref{fig:1-2-multi}.

\begin{figure}[H]
  \centering
  \subfigure[$f(X_t)=-X_t^3+X_t$, $g(X_t) = X_t+1$, (left) true and approximated drift coefficients, (right) true and approximated diffusivity coefficients]{
    \label{fig:subfig:1-2-dw-multi1} 
    \includegraphics[scale=0.48]{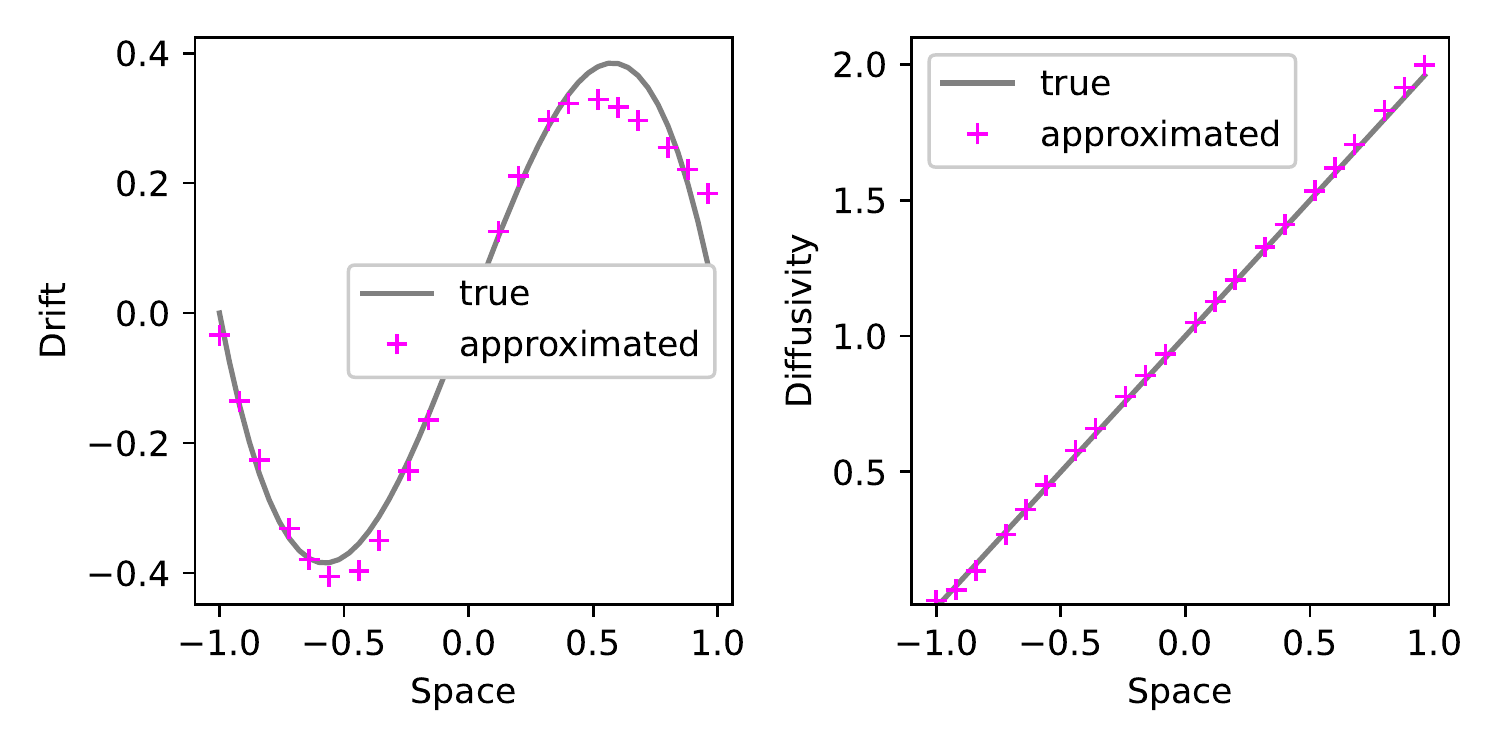}}
  \hspace{0.1in} 
  \subfigure[$f(X_t)=-X_t^3+X_t$, $g(X_t) = \sin(\pi X_t) + 1$, (left) true and approximated drift coefficients, (right) true and approximated diffusivity coefficients]{
    \label{fig:subfig:1-2-dw-multi2} 
    \includegraphics[scale=0.48]{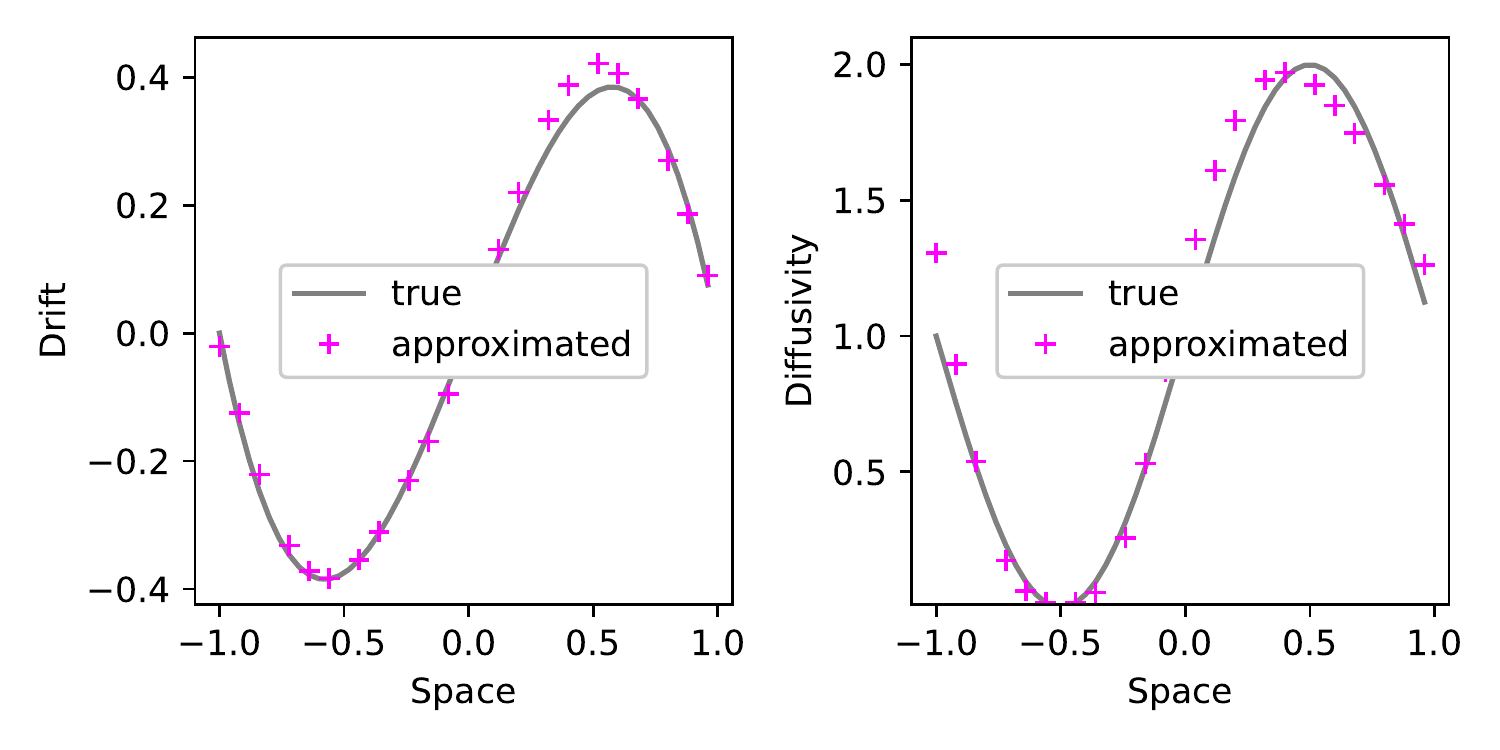}}
  \caption{Multiplicative noise with different diffusion coefficients for $\alpha=1.5$, the gray lines represent the true coefficients and the magenta plus markers are the estimated drift and diffusion results from our proposed neural networks.}
  \label{fig:1-2-multi} 
\end{figure}

~\\
\textbf{Improvement of accuracy}:

A cursory look reveals that the estimation results are poor when the noise is large, i.e., the poor performance at the right end of the axis of Figure \ref{fig:1-2-multi}. We add more samples at the large noise, and the data volume becomes $75\times1000$ which the extra samples are added to the interval $(0,1)$. The improved results are shown in Figure \ref{fig:1-2-multi-more}.

\begin{figure}[H]
  \centering
  \subfigure[$f(X_t)=-X_t^3+X_t$, $g(X_t) = X_t+1$, (left) true and approximated drift coefficients, (right) true and approximated diffusivity coefficients]{
    \label{fig:subfig:1-2-dw-multi1-more} 
    \includegraphics[scale=0.48]{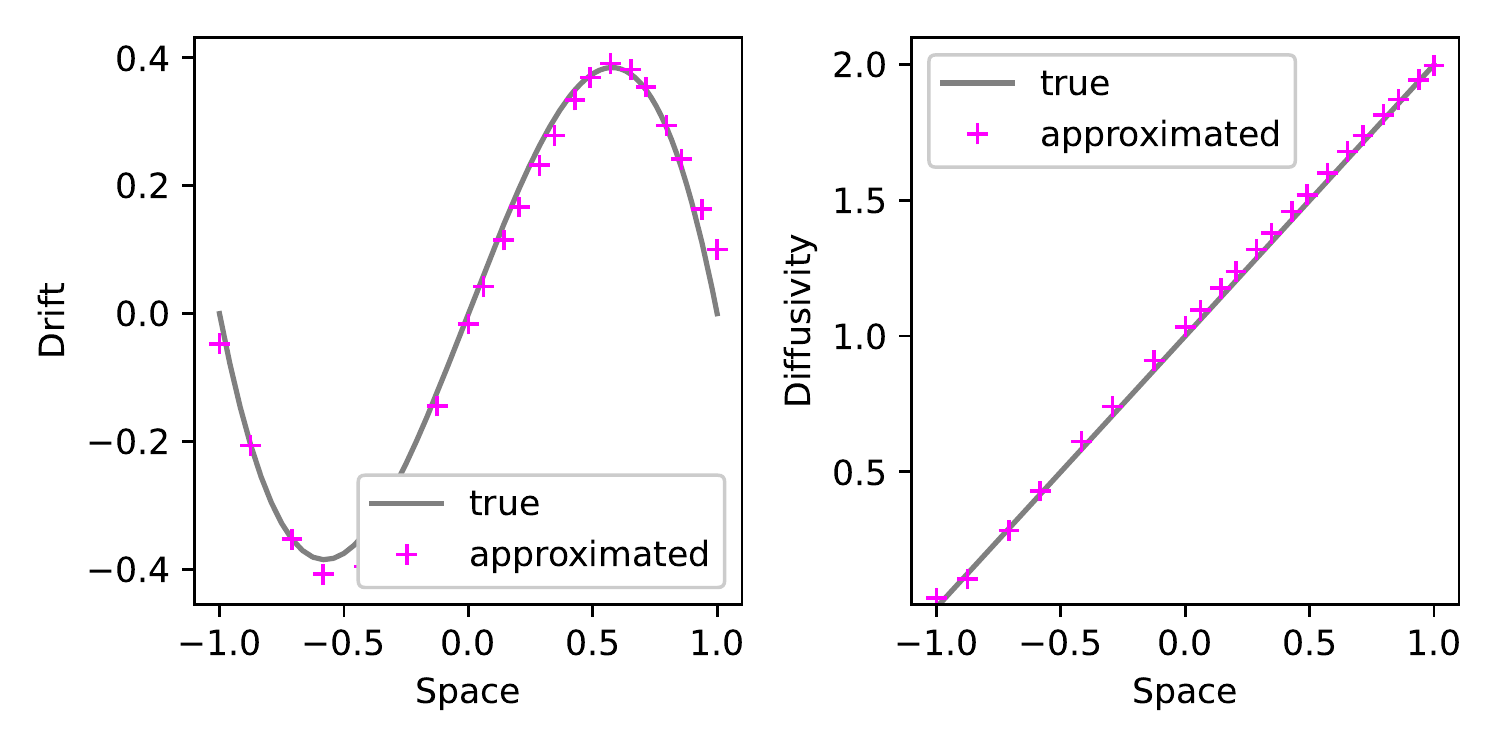}}
  \hspace{0.1in} 
  \subfigure[$f(X_t)=-X_t^3+X_t$, $g(X_t) = \sin(\pi X_t) + 1$, (left) true and approximated drift coefficients, (right) true and approximated diffusivity coefficients]{
    \label{fig:subfig:1-2-dw-multi2-more} 
    \includegraphics[scale=0.48]{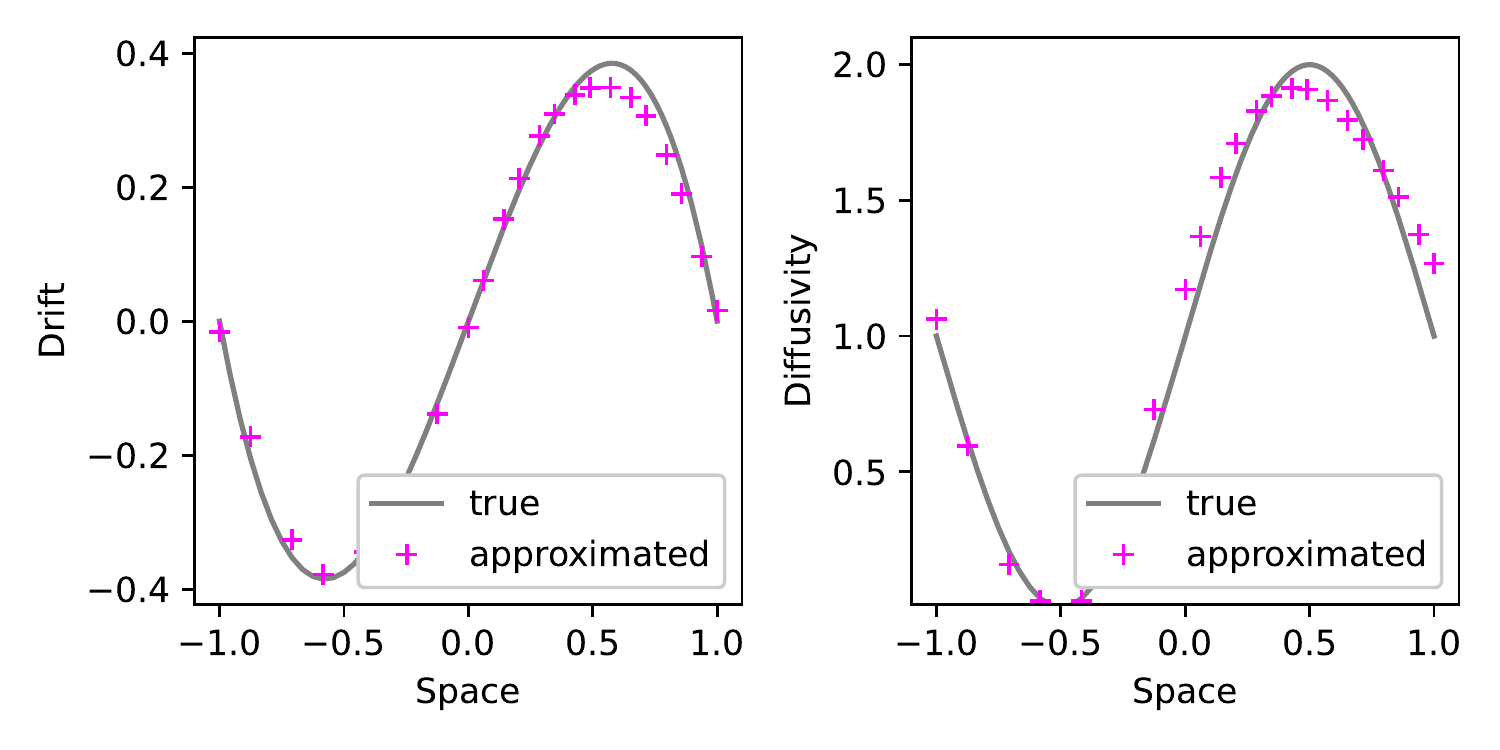}}
  \caption{Multiplicative noise with different diffusion coefficients for $\alpha=1.5$, increased sample size at large noise, the gray lines represent the true coefficients and the magenta plus markers are the estimated drift and diffusion results from our proposed neural networks.}
  \label{fig:1-2-multi-more} 
\end{figure}

\subsection{Comparison with nonlocal Kramers-Moyal formulas}

Next, we compare a novel method mentioned in \cite{Li2021ExtractingSD} which stability parameter $\alpha \in (1,2)$ (due to limitations of the moment generation function method). In this paper, they approximate the stability parameter $\alpha$ and noise intensity $g$ through computing the mean and variance of the amplitude
of increment of the sample paths. Then they estimate the drift coefficient via nonlocal Kramers-Moyal formulas. We keep settings consistent, i.e., $N=50\times1000$, $x_0 \in [-2.5, 2.5]$, the true drift coefficient $f(X_t) = 4X_t-X_t^3$ and the constant noise term $g=1$ except step size. For more details, see \cite{Li2021ExtractingSD}. Figure \ref{compare} shows the results of the comparison, the results are comparable and our work is somewhat competitive.

\begin{figure}[H]
\centering
\includegraphics[width=12cm]{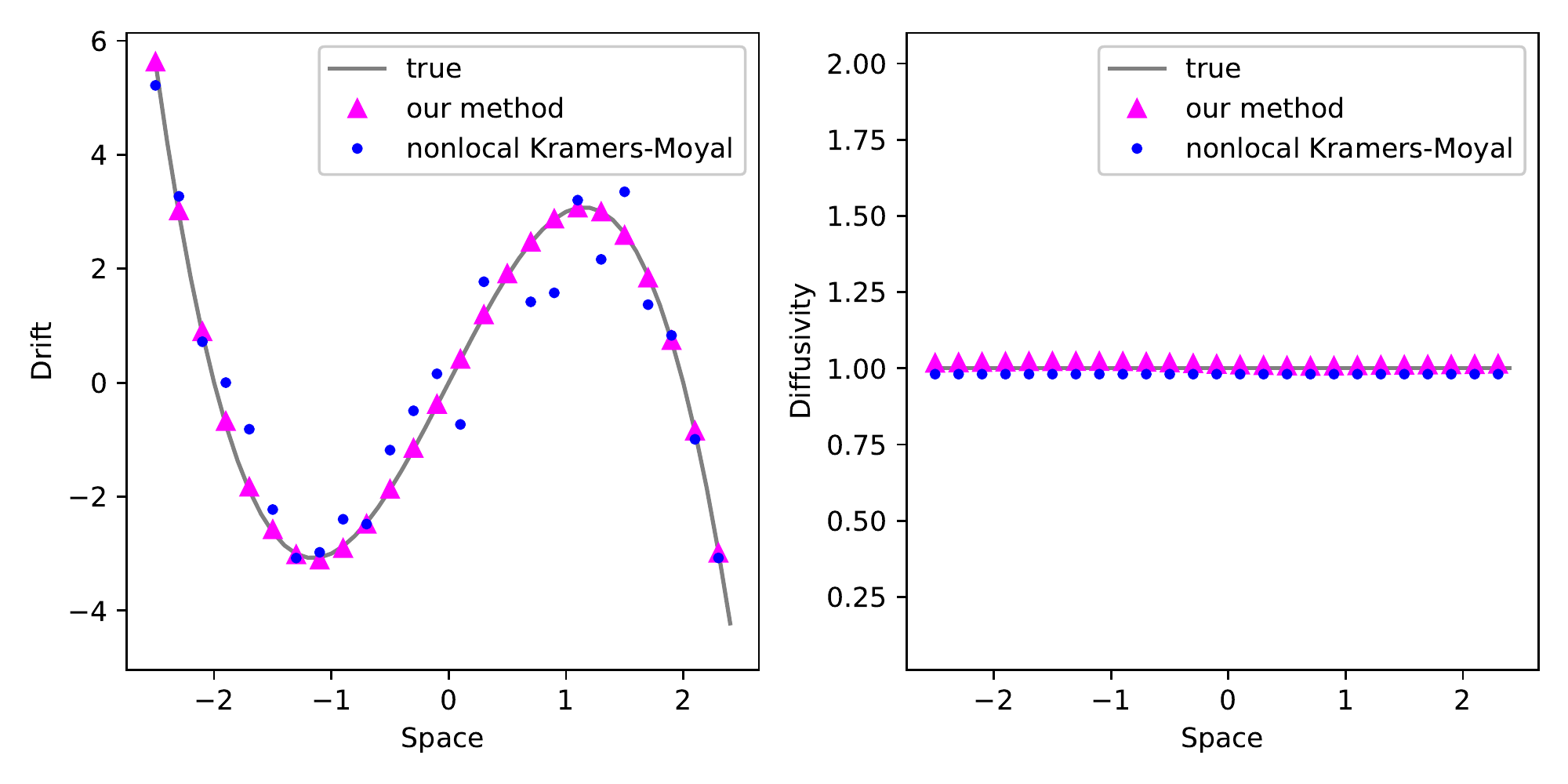}
\caption{Compare with nonlocal Kramers-Moyal formulas with moment generation function by a data-driven method, the gray lines represent the true coefficients, the magenta triangle markers are our results and the blue dots are results of nonlocal Kramers-Moyal method with moment generation function, (left) true and approximated drift coefficients, (right) true and approximated diffusivity coefficients}
\label{compare}
\end{figure}

\subsection{Multi-dimensional experiments}

Finally, we implement the multi-dimensional situation stated in Section \ref{multi_dim}. The neural networks are built in the same way that Section \ref{one dim levy experiments} is. Consider the Maier-Stein model under $\alpha$-stable L\'evy noise in $\mathbb{R}^2$:

\begin{equation} \label{M-S}
    \begin{aligned}
        dX_t &= (X_t - X_t^3 - kX_tY_t^2)dt + dL_{t,1}^\alpha ,\\
        dY_t &= [-(1 + X_t^2)Y_t]dt + dL_{t,2}^\alpha,
    \end{aligned}
\end{equation}
where $L_{t,1}^\alpha$, $L_{t,2}^\alpha$ are two independent one-dimensional symmetric $\alpha$-stable L\'evy motions and $k$ is a positive parameter. We assign $k$ to $1$, as detailed in \cite{Li2020MostPD}. Since we have a different structure of the multi-dimensional L\'evy motion, we no longer compare our methods with those in \cite{Li2021ExtractingSD}. We take sample size $N=(40\ \text{different values of}\ x_0,\  y_0\ \text{respectively}) \times1000$, $h = 0.5$, $(x_0,y_0) \in [-1,1] \times [-1,1]$ and $\alpha=1.5$. We compare the true coefficients and the learned coefficients in the form of heat maps (Figure \ref{fig:M-S MODEL}). Intuitively, the method we proposed reaches a relatively good estimation, and $\mathcal{L}^2$ errors support this view.

Moreover, we give a simple linearly coupled two-dimensional SDE with multiplicative noise in Appendix \ref{2d-multi} as a validation for multi-dimension with multiplicative noise. However, with the dimension of the state space increasing, the sample size needs to grow dramatically, causing issues such as high computational burden and memory limit exceeded. This inspires us to look for more innovative algorithms with sample efficiency and relatively lower computational cost.

\begin{figure}[H]
  \centering
  \subfigure[$f_1(X_t,Y_t)=X_t - X_t^3 - X_tY_t^2$, (left) true drift coefficient, (right) approximated drift coefficient]{
    \label{fig:subfig:2d-ms-drift1} 
    \includegraphics[scale=0.38]{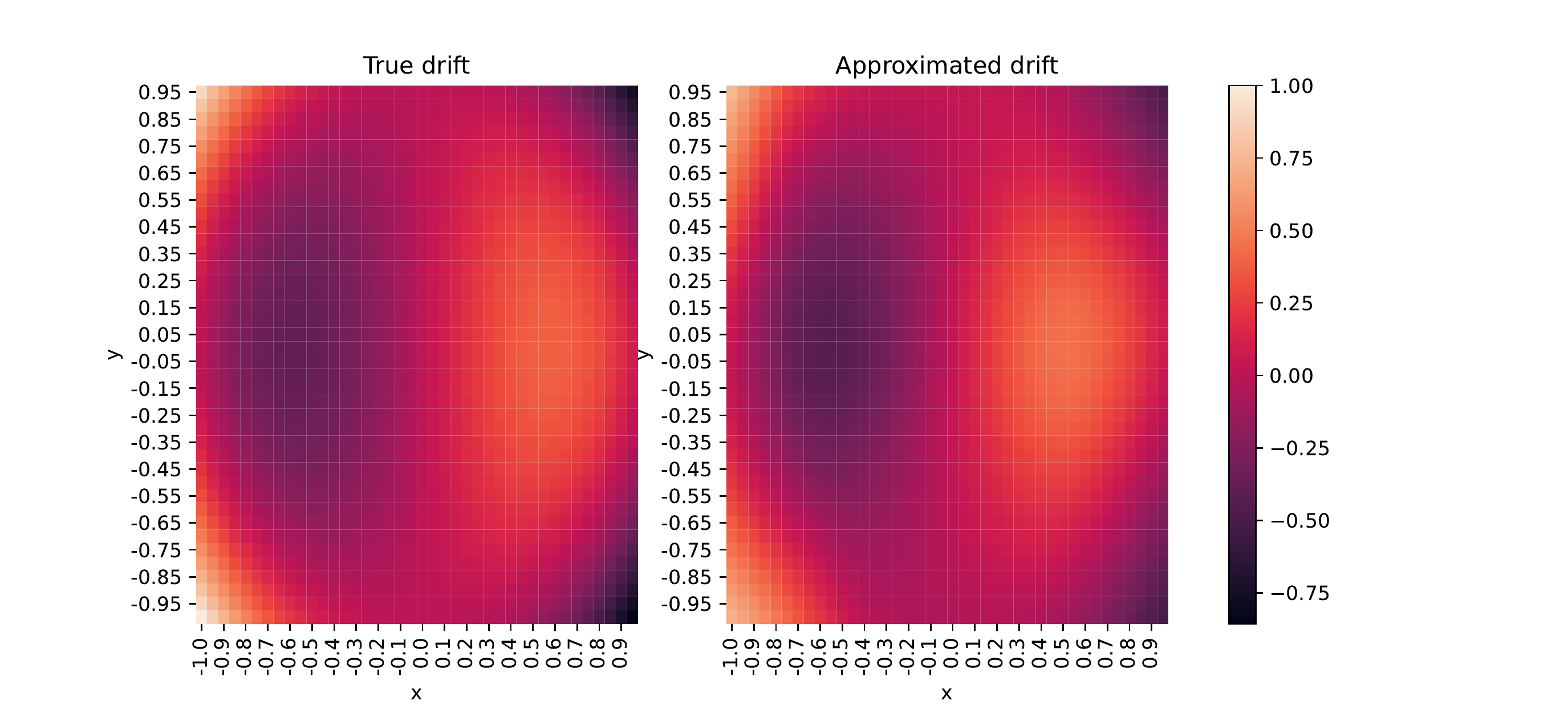}}
  \hspace{0.1in} 
  \subfigure[$f_2(X_t,Y_t)=-(1 + X_t^2)Y_t$, (left) true drift coefficient, (right) approximated drift coefficient]{
    \label{fig:subfig:2d-ms-drift2} 
    \includegraphics[scale=0.38]{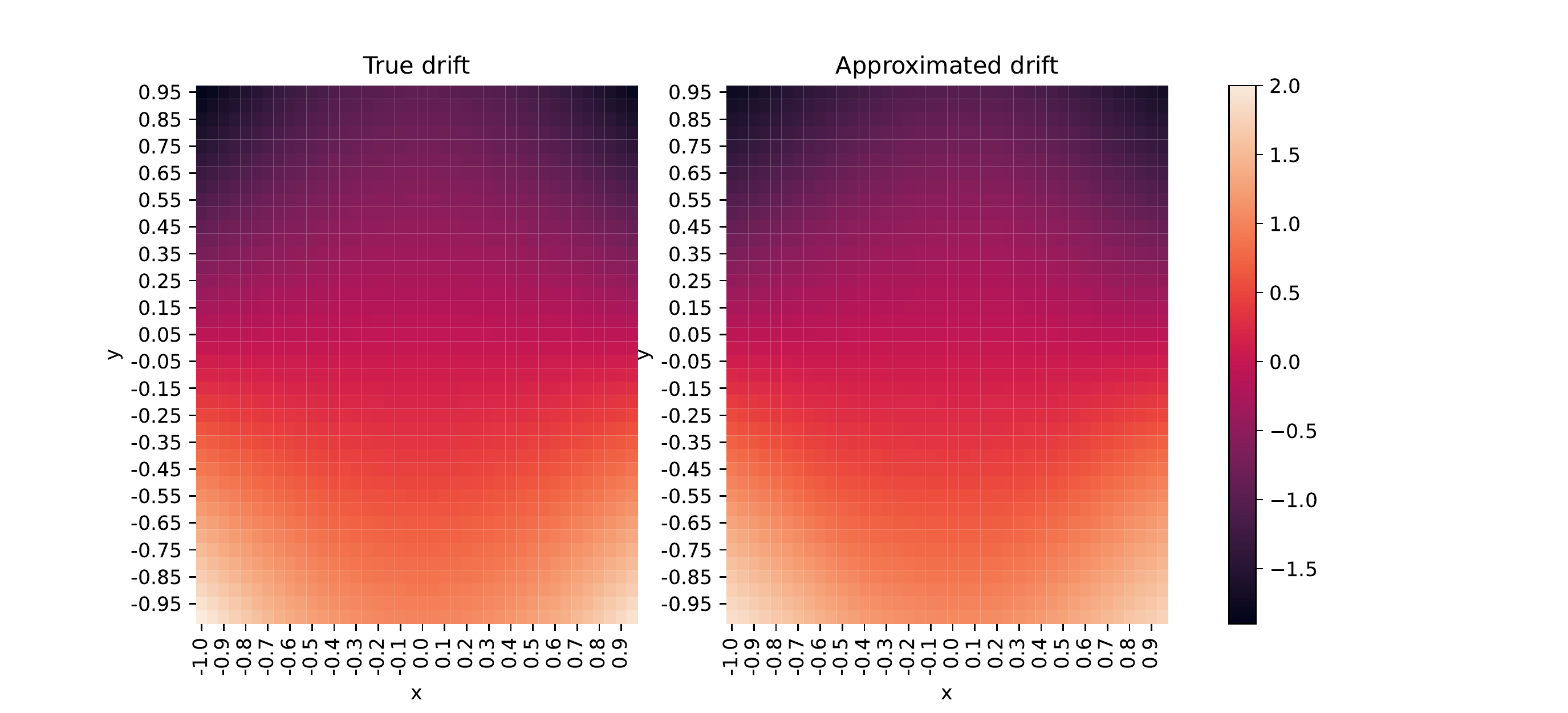}}
  \hspace{0.1in} 
  \subfigure[$g_1(X_t,Y_t)=1$, (left) true diffusivity coefficient, (right) approximated diffusivity coefficient]{
    \label{fig:subfig:2d-ms-diffusion1} 
    \includegraphics[scale=0.38]{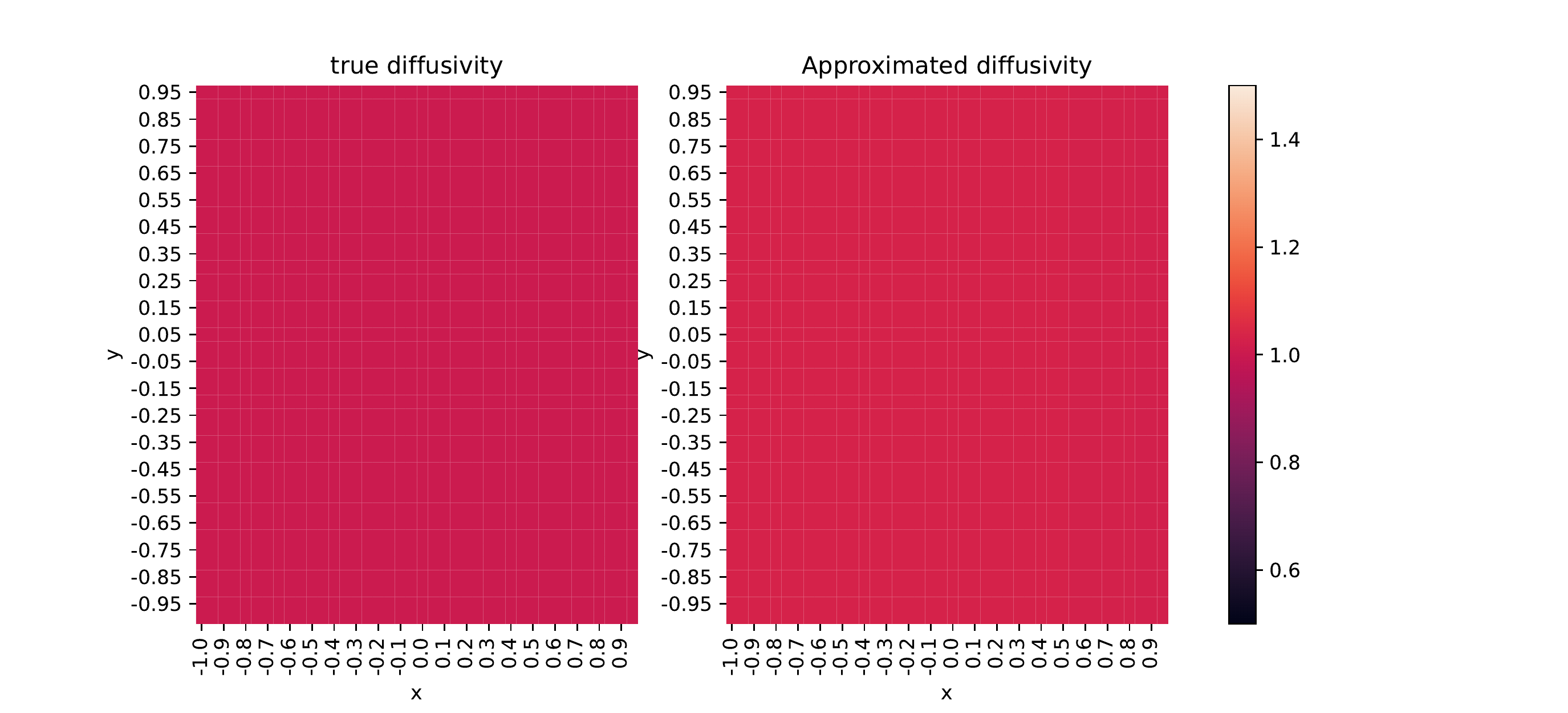}}
  \hspace{0.1in} 
  \subfigure[$g_2(X_t,Y_t)=1$, (left) true diffusivity coefficient, (right) approximated diffusivity coefficient]{
    \label{fig:subfig:2d-ms-diffusion2} 
    \includegraphics[scale=0.38]{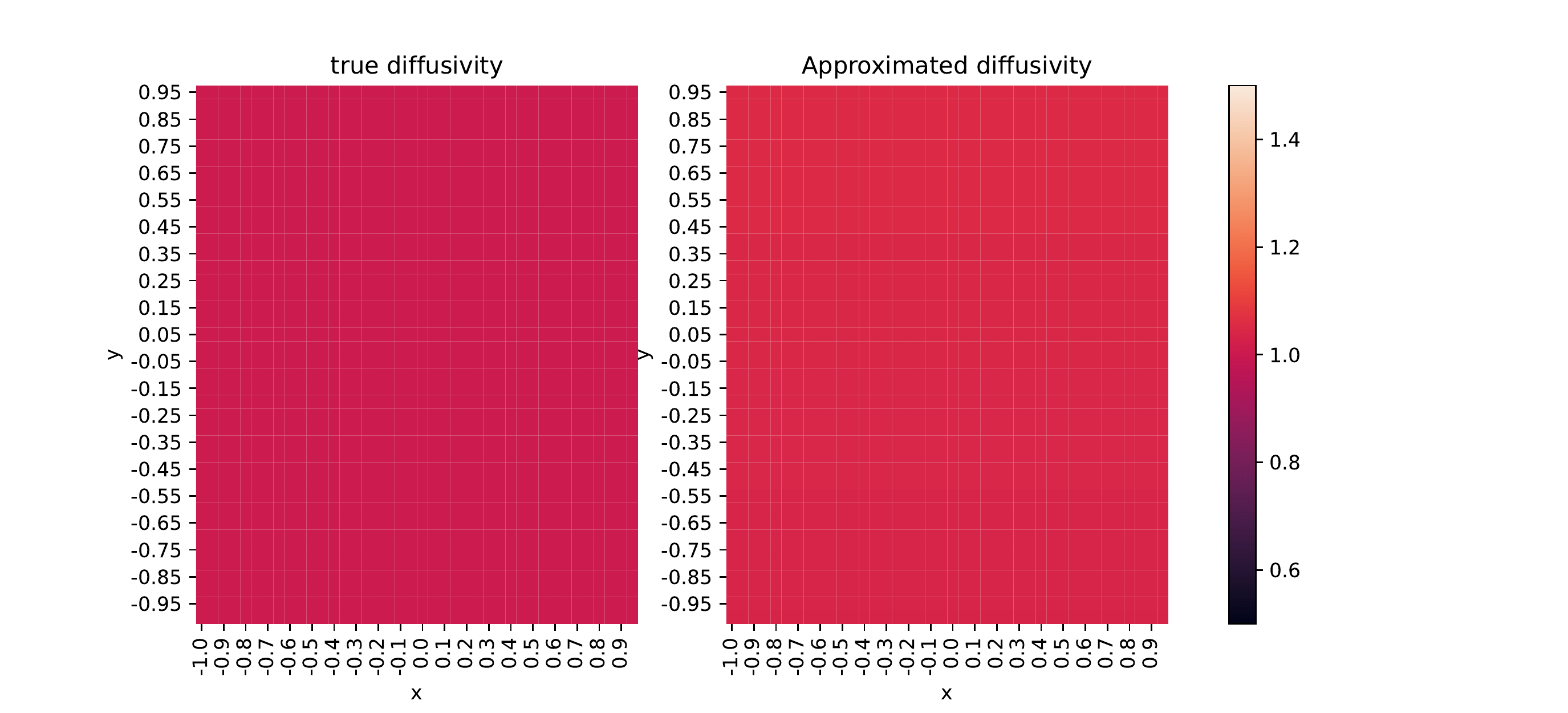}}
  \caption{The drift coefficients (true and estimated)  and diffusion coefficients (true and estimated) of the Maier-Stein model, presented as heat maps.}
  \label{fig:M-S MODEL} 
\end{figure}

\section{Summary} \label{summary}

In this article, we have devised a novel method based on neural networks to identify stochastic dynamics from snapshot data. In particular, we start with a simple identification of the stability parameter and then estimate the drift coefficient via the network. After obtaining an estimation of the drift coefficient, we calculate the diffusion coefficient directly for Cauchy noise and estimate it through another neural network with log-likelihood estimation for $\alpha$-stable L\'evy noise. Numerical experiments on both additive and multiplicative noise illustrate the accuracy and effectiveness of our method.

Compared with nonlocal Kramers-Moyal method with moment generation function, this method has the advantage that it doesn't restrict the value range of the stability parameter and allow multiplicative noise. Even in the face of large noise, simply increasing the local sample size can improve the estimation. 

Moreover, we use Maier-Stein model as an illustration for multi-dimensional SDEs under both additive and multiplicative L\'evy noise.

We give some brief error analysis, and the error of this method is mainly derived from the following parts: errors from neural networks, maximum likelihood estimation, and discretization method. The first two types of errors can be thought as optimization error and generalization error which are determined by the model adopted and the sample size. The choice of hyperparameters also tends to be subjective through personal experience. The last error which can be regarded as approximation error depends on the numerical discretization method and replacing the numerical method also results in a change in the probability distribution, the likelihood function, and other details.  We have tried to balance the various errors so that the result is what we want.

Finally, there still exist some limitations on the applications of this method. For example, although the maximum log-likelihood estimation is precise, the training process is slow. In addition, We will try to learn the probability density function in more general multi-dimensional cases.

\section*{Acknowledgements}
We would like to thank Lingyu Feng, Wei Wei, Min Dai, Yang Li for helpful discussions. This work was supported by National Natural Science Foundation of China (NSFC) 12141107.

\section*{Data Availability}
The data that support the findings of this study are openly available in
GitHub \url{https://github.com/Fangransto/learn-alpha-stable-levy.git}.

\newcommand{\etalchar}[1]{$^{#1}$}

\begin{appendices}

\section{The $\alpha$-stable random variables (1-D) } \label{alpha rv}

\renewcommand\theequation{\Alph{section}\arabic{equation}}
 
\setcounter{equation}{0}

The $\alpha$-stable distributions are a rich class of probability distributions that allow skewness and heavy tails and have many intriguing mathematical properties. According to definition \cite{duan2015introduction}, a random variable $X$ is called a stable random variable if it is a limit in distribution of a scaled sequence $(S_n-b_n)/a_n$, where $S_n = X_1 + \dots + X_n$, $X_i$ are some independent identically distributed random variables and $a_n > 0$ and $b_n$ are some real sequences.

The distribution of a stable random variable is denoted as $S_\alpha(\sigma,\beta,\gamma)$. The $\alpha$-stable distribution requires four parameters for complete description: an index of stability $\alpha \in (0,2]$ also called the tail index, tail exponent or characteristic exponent, a skewness parameter $\beta \in [-1,1]$, a scale parameter $\sigma > 0$ and a location parameter $\gamma \in \mathbb{R}^1$. As mentioned in \cite{duan2015introduction}, \cite{borak2005stable}, \cite{mcculloch1986simple} and \cite{mittnik1999maximum},  the $\alpha$-stable random variables $X \sim S_\alpha(\sigma,\beta,\gamma)$ whose densities lack closed form formulas for all but three distributions, see Table \ref{tab:closed form}.

\begin{table}[H]
\caption{\label{tab:closed form}Closed form formulas for three distributions.}
\centering
\begin{tabular}{|c|c|c|}
\hline
Name & Parameters & Probability density function \\\hline
normal distribution & $\alpha=2,\  \beta=0$ & $p(x)=\dfrac{1}{\sqrt{(2\pi)\times(2\sigma^2)}}\exp(-\dfrac{(x-\gamma)^2}{4\sigma^2})$\\\hline
Cauchy distribution & $\alpha=1,\  \beta=0$ & $p(x)=\dfrac{\sigma}{\pi[(x-\gamma)^2+\sigma^2]}$\\\hline
L\'evy distribution & $\alpha=1/2,\  \beta=1$ & 

$p(x)=\left\{
    \begin{array}{lr}
    \sqrt{\dfrac{\sigma}{2\pi}}(x-\gamma)^{-\frac{3}{2}}\exp\big[-\dfrac{\sigma}{2(x-\gamma)}\big],\  for\ x > \gamma\\
    \\
    0,\   for\ x \le \gamma\\
    \end{array}
    \right.
$
\\\hline
\end{tabular}

\end{table}

\subsection{Probability density functions and characteristic functions} \label{alpha pdf cf}

Here we use three explicit expressions to express the probability density functions of standard symmetric $\alpha$-stable random variables ($X \sim S_\alpha(1, 0 , 0)$, $\alpha \in (0,2)$, $\alpha \ne 1$):

\begin{itemize}
\item Represented as infinite series \cite{duan2015introduction}, 
\begin{equation}
    p(x;\alpha, \beta=0, \sigma=1, \gamma=0) =
    \left\{
    \begin{array}{lr}
    \frac{1}{\pi x}\sum_{k=1}^{\infty}\frac{(-1)^{k-1}}{k!}\Gamma(\alpha k +1)|x|^{-\alpha k}\sin(\frac{k\alpha \pi}{2}),\  for\ x \ne 0,\ 0< \alpha < 1, \\
    \\
    \frac{1}{\pi}\int_0^{\infty}e^{-u^{\alpha}}du,\  for\ x = 0,\ 0< \alpha < 1,\\
    \\
    \frac{1}{\pi\alpha}\sum_{k=0}^{\infty}\frac{(-1)^k}{2k!}\Gamma(\frac{2k+1}{\alpha})x^{2k},\  for\ 1 <\alpha < 2.\\
    \end{array}
    \right.
    \label{pdf1 }
\end{equation}
\item By virtue of characteristic functions and the Fourier transform \cite{borak2005stable}, 
\begin{equation}
    p(x;\alpha, \beta=0, \sigma=1, \gamma=0) = \frac{1}{2\pi}\int_{-\infty}^{\infty}e^{-ixt}{\phi(t;\alpha, \beta=0, \sigma=1, \gamma=0)}dt,
    \label{pdf2 }
\end{equation}
\item Zolotarev formulas \cite{borak2005stable},
\begin{equation}
    p(x;\alpha, \beta=0, \sigma=1, \gamma=0)=
    \left\{
    \begin{array}{lr}
    \dfrac{\alpha x^{\frac{1}{\alpha-1}}}{\pi|\alpha-1|}\int_0^{\frac{\pi}{2}}V(\theta;\alpha,\beta=0)\exp\{-x^{\frac{1}{\alpha-1}}V(\theta;\alpha,\beta=0)\}d\theta,\  for\ x > 0,\\
    \\
    \dfrac{\Gamma(1+\frac{1}{\alpha})}{\pi},\  for\ x = 0,\\
    \\
    p(-x;\alpha, -\beta=0, \sigma=1, \gamma=0),\  for\ x < 0,\\
    \end{array}
    \right.
    \label{pdf3 }
\end{equation}
where $V(\theta;\alpha,\beta=0)=\Big(\dfrac{\cos\theta}{\sin\alpha\theta}\Big)^{\frac{\alpha}{\alpha-1}}\cdot\dfrac{\cos\{(\alpha-1)\theta\}}{\cos\theta}$.
\end{itemize}

Notice that, we mention $\alpha$-stable random variables' characteristic functions in (\ref{pdf2 }).  The most popular parameterization of the characteristic function of $X \sim S_\alpha(\sigma, \beta , \gamma)$ is given by:

\begin{equation}
    \mathbb{E}\exp(i\left \langle t,X\right \rangle) = \ln\phi(t)=
    \left\{
    \begin{array}{lr}
    -\sigma^{\alpha}|t|^{\alpha}\{1-i\beta\sign t\tan(\frac{\pi\alpha}{2})\}+i\gamma t\  \alpha \neq 1,\\
    \\
    -\sigma |t|\{1+i\beta\sign t\frac{2}{\pi}\ln|t|\}+i\gamma t,\  \alpha=0.\\
    \end{array}
    \right.
    \label{cf1}
\end{equation}
It is often advisable to use Nolan’s parameterization $S^0_{\alpha}(\sigma, \beta , \gamma_0)$:
\begin{equation}
    \ln\phi_0(t)=
    \left\{
    \begin{array}{lr}
    -\sigma^{\alpha}|t|^{\alpha}\{1+i\beta\sign t\tan(\frac{\pi\alpha}{2})[(\sigma|t|)^{1-\alpha}-1]\}+i\gamma_0 t\  \alpha \neq 1,\\
    \\
    -\sigma |t|\{1+i\beta\sign t\frac{2}{\pi}\ln(\sigma|t|)\}+i\gamma_0 t,\  \alpha=0.\\
    \end{array}
    \right.
    \label{cf2}
\end{equation}

The location parameters of the two representations are related by $\gamma = \gamma_0-\beta\sigma\tan(\frac{\pi\alpha}{2})$ for $\alpha \ne 1$ and $\gamma = \gamma_0-\beta\sigma\frac{2}{\pi}\ln\sigma$ for $\alpha = 1$.
 
\subsection{Basic Properties of $\alpha$-stable random variables}

We recall some properties of $\alpha$-stable random variables which may be used in the article.

An important property is that $\alpha$-stable random variables have the characteristic of sharp peak and heavy tail compared with the Gaussian random variables, as the tail estimate decays polynomially:

\begin{equation}
    \lim_{y \rightarrow \infty} y^{\alpha}\mathbb{P}(X>y) = C_\alpha \frac{1+\beta}{2}\sigma^\alpha,
    \qquad
    \lim_{y \rightarrow \infty} y^{\alpha}\mathbb{P}(X<-y) = C_\alpha \frac{1-\beta}{2}\sigma^\alpha,
\end{equation}
where $C_\alpha>1$, $X \sim S_\alpha(\sigma,\beta,\gamma)$, see \cite{Samorodnitsky1995StableNR}, Chapter 1. 

We can generate random numbers from a scalar standard symmetric $\alpha$-stable random variable $S_\alpha(1,0,0)$, for $\alpha \in (0,2)$ \cite{duan2015introduction}:

First generate a uniform random variable $V$ on $(-\frac{\pi}{2},\frac{\pi}{2})$ and an exponential random variable $W$ with parameter $1$. Then a scalar standard symmetric $\alpha$-stable random variable $X \sim \alpha \in (0,2)$ is produced by 

\begin{equation}
X=\frac{\sin \alpha V}{(\cos V)^\frac{1}{\alpha}} \Big\{ \frac{\cos (V-\alpha V)}{W} \Big\} ^\frac{1-\alpha}{\alpha}.
\end{equation}

\section{Drift estimation trick} \label{drift trick}

\renewcommand\theequation{\Alph{section}\arabic{equation}}
 
\setcounter{equation}{0}

Recall the role of the drift coefficients in the SDEs, especially in the Euler-Maruyama discretization method (\ref{E-M LDE}), they assume part of the role of determining the location parameter of $x_1$. However, noise intensity affects the judgment of the location parameter because it allows $x_1$ to move or jump to a very far position in the state space. We have assumed that the $\alpha$-stable L\'evy motions are symmetric in Section \ref{sde setting}, so we can select representative points to eliminate the effects of noise. Specifically, we select the mean, median, and the middle 20\% order statistics for the same $x_0$. A comparison with a no noise differential equation numerical solution ($x_1 = x_0 + f(x_0)h$) is shown in Figure \ref{trick}. Finally, we choose the middle 20\% order statistics for training.

\begin{figure}[H]
\centering
\includegraphics[width=12cm]{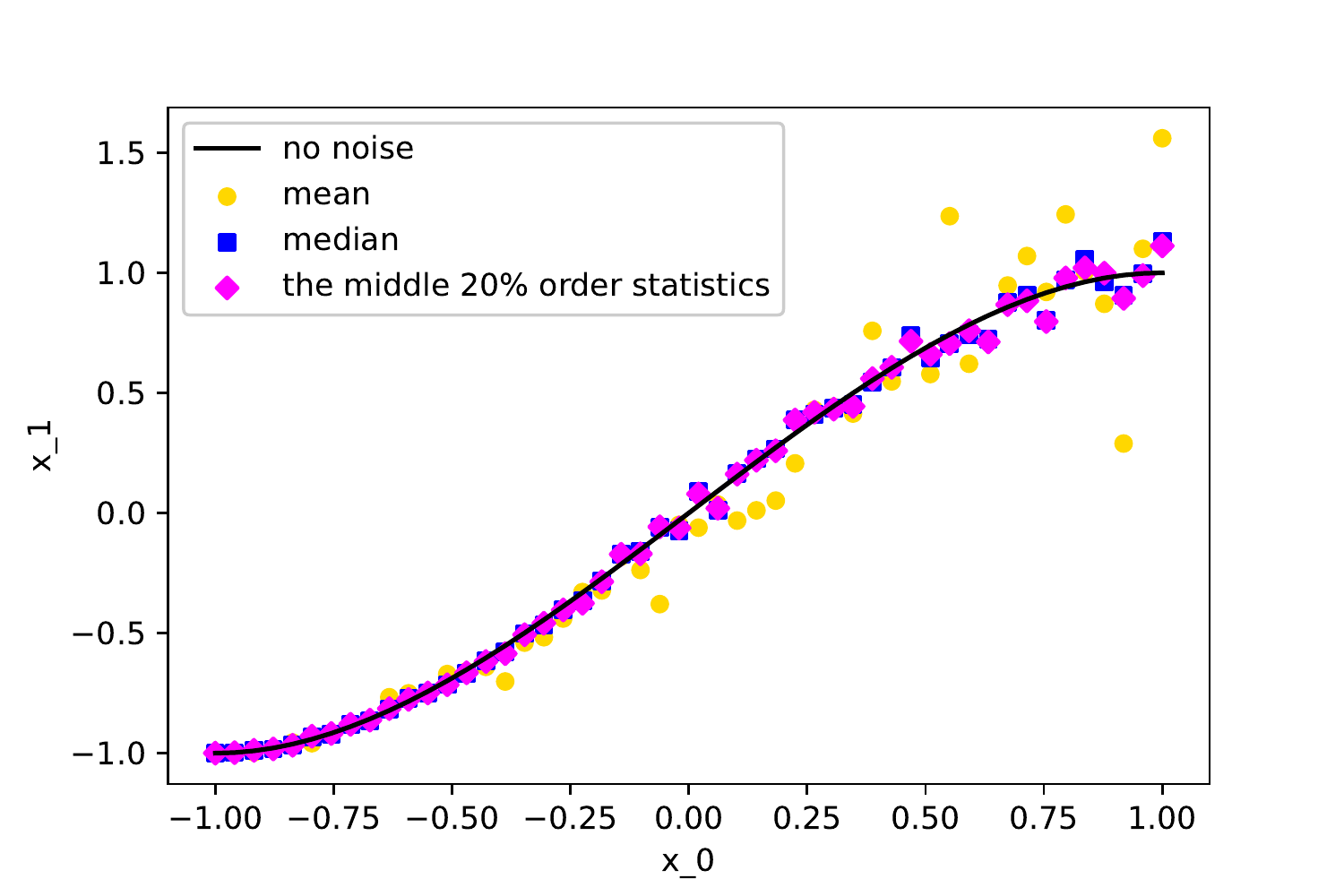}
\caption{Compare mean (gold dots), median (blue squares), the middle 20\% order statistics (magenta diamonds) for the same $x_0$. The results show that the middle 20\% order statistics is the closest case to no noise differential equation numerical solution (black line). Here the true drift coefficient and diffusion coefficient are $f(X_t)=-X_t^3+X_t$ and $g(X_t)= X_t +1$, separately.}
\label{trick}
\end{figure}

\section{More experimental results} \label{other experiments}

\renewcommand\theequation{\Alph{section}\arabic{equation}}
\setcounter{equation}{0}

In this appendix, we show more experimental results, including additive/multiplying noise, different $\alpha$ values, different classes of drift coefficients, a two-dimensional SDE with linear drift coefficients and linear diffusion coefficients.

\subsection{SDEs driven by Cauchy motion}

We maintain the same setting as in Section \ref{cauchy experiments}.

\textbf{Additive noise} 

In addition to what is mentioned in the article, we also consider drift coefficients:  $f(X_t)=-X_t+1$ and $f(X_t)=\sin(X_t)$. Here we take sample size $N=10000$, $h = 0.01$, $x_0 \in [-3,3]$. The comparison results can be seen in Figure \ref{fig:cauchy-add}. The results of the constant noise intensity are also obtained: $\hat{g}_{\theta} = 0.9799 $, $\hat{g}_{\theta} = 0.9593 $.

\begin{figure}[H]
  \centering
  \subfigure[$f(X_t)=-X_t+1$, $\hat{g}_{\theta} = 0.9799 $]{
    \label{fig:subfig:cauchy-lin-add} 
    \includegraphics[scale=0.62]{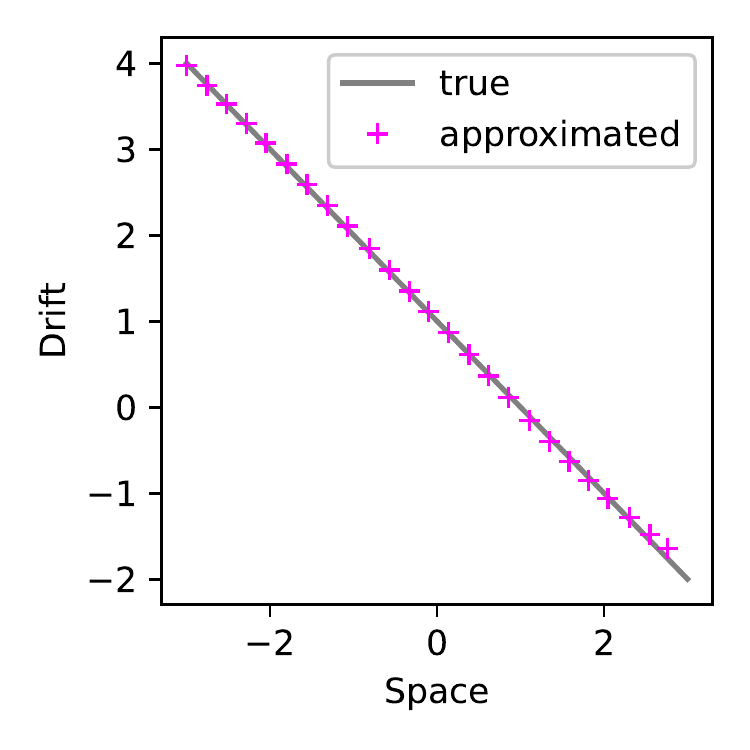}}
  \hspace{0.1in} 
  \subfigure[$f(X_t)=\sin(X_t)$, $\hat{g}_{\theta} = 0.9593 $]{
    \label{fig:subfig:cauchy-sin-add} 
    \includegraphics[scale=0.62]{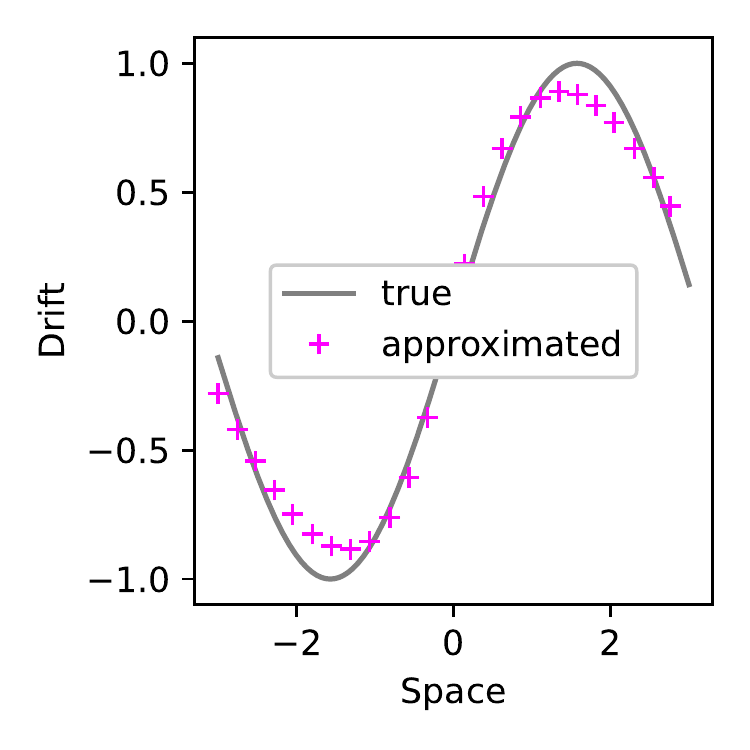}}
  \caption{Cauchy additive noise with different drift coefficients, the gray lines represent the true drift coefficients and the magenta plus markers are the neural networks' results. $\hat{g}_{\theta}$ represent the diffusion coefficients calculated by (\ref{cauchy loss2}).}
  \label{fig:cauchy-add} 
\end{figure}

\textbf{Multiplicative noise}

Replacing additive noise with multiplicative noise, we test the multiplier noise as $g(X_t) = 0.1X_t+0.5$. Following the construction in the previous section except sample size $N = 20$ (different $x_0$) $\times1000$ (sample\ size), a comparison between the learned and true drift/diffusion functions is shown in Figure \ref{fig:cauchy-mul}. 

\begin{figure}[H]
  \centering
  \subfigure[$f(X_t)=-X_t+1$, $g(X_t) = 0.1X_t+0.5$, (left) true and approximated drift coefficients, (right) true and approximated diffusivity coefficients]{
    \label{fig:subfig:cauchy-lin-mul} 
    \includegraphics[scale=0.48]{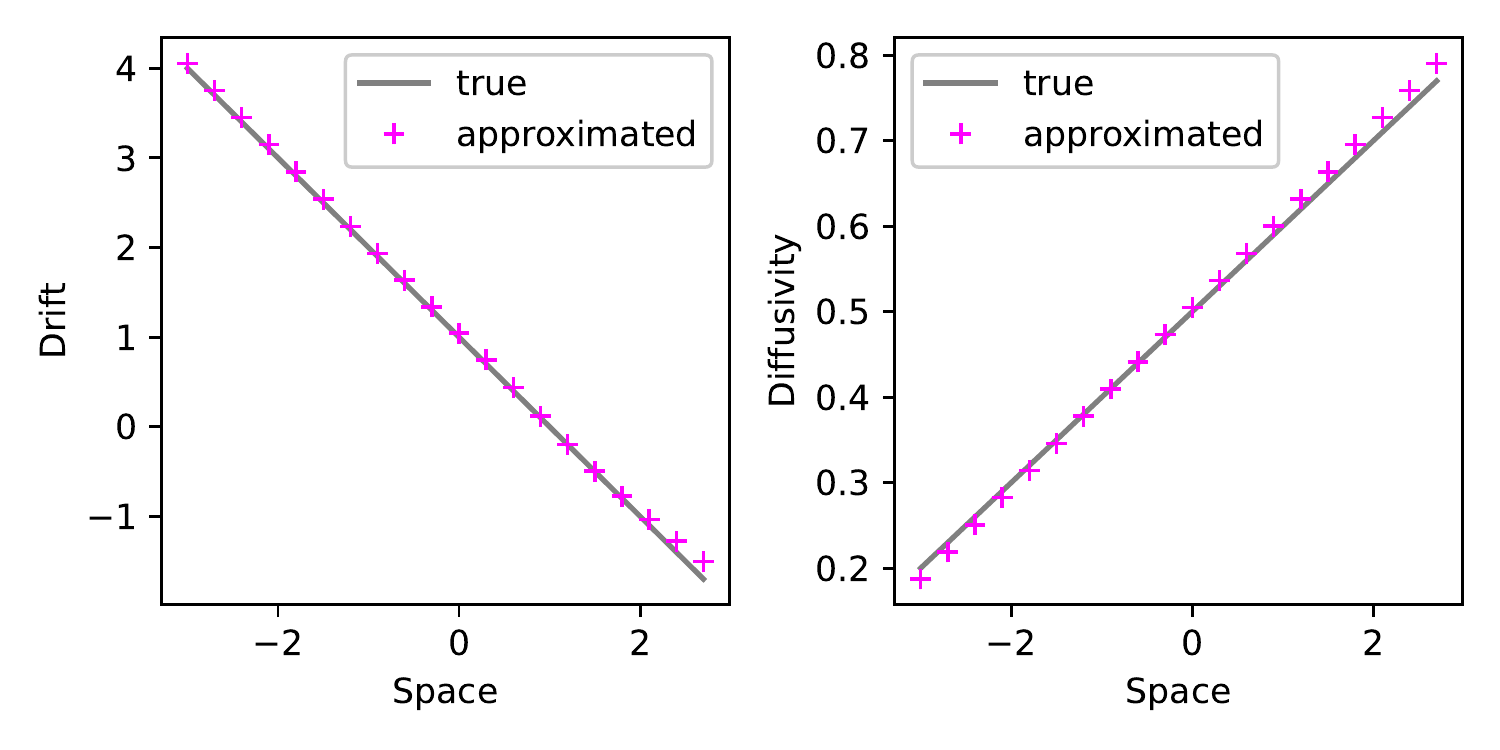}}
  \hspace{0.1in} 
  \subfigure[$f(X_t)=\sin(X_t)$, $g(X_t) = 0.1X_t+0.5$, (left) true and approximated drift coefficients, (right) true and approximated diffusivity coefficients]{
    \label{fig:subfig:cauchy-sin-mul} 
    \includegraphics[scale=0.48]{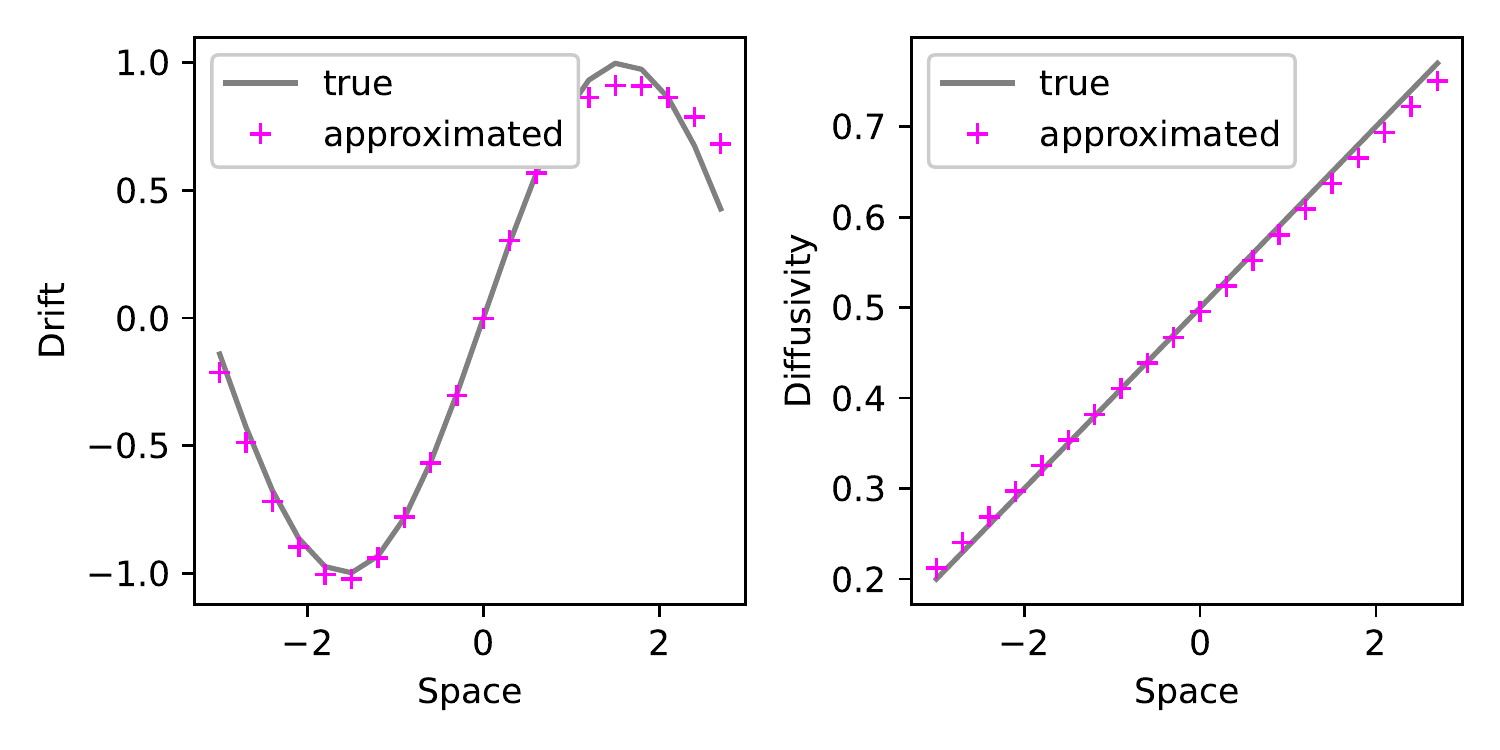}}
  \caption{Cauchy multiplicative noise with different drift coefficients, the gray lines represent the true drift coefficients and true diffusion coefficients, and the magenta plus markers are the estimated drift and diffusion results from our proposed neural networks.}
  \label{fig:cauchy-mul} 
\end{figure}

\subsection{SDEs driven by $\alpha$-stable L\'evy motion}

In this section, neural networks are used twice, separately to estimate drift and diffusion. We continue the construction in Section \ref{one dim levy experiments}.

\textbf{Additive noise}

For $\mathbf{\alpha=1.5}$, we test the drift coefficients $f(X_t) = -X_t+1$, $f(X_t) = -X_t^3 + X_t$ and $f(X_t)=\log (X_t+1.5)-|X_t+1.5|^{1/3}$ in equation (\ref{LDE}), diffusion coefficient $g = 1$. The technique mentioned in appendix \ref{drift trick} is still used. Here we take sample size $N=5\times1000$, $h = 0.1$ for the linear drift coefficient, $N=50\times1000$, $h = 0.5$ for the other two more complex drift coefficients, both cases $x_0 \in [-1,1]$.
The true drift coefficients and the fitting results of the neural network are compared as shown in Figure \ref{fig:1-2-add}.

\begin{figure}[H]
  \centering
  \subfigure[$f(X_t)=-X_t+1$, $g(X_t) = 1$, (left) true and approximated drift coefficients, (right) true and approximated diffusivity coefficients]{
    \label{fig:subfig:1-2-lin-add} 
    \includegraphics[scale=0.48]{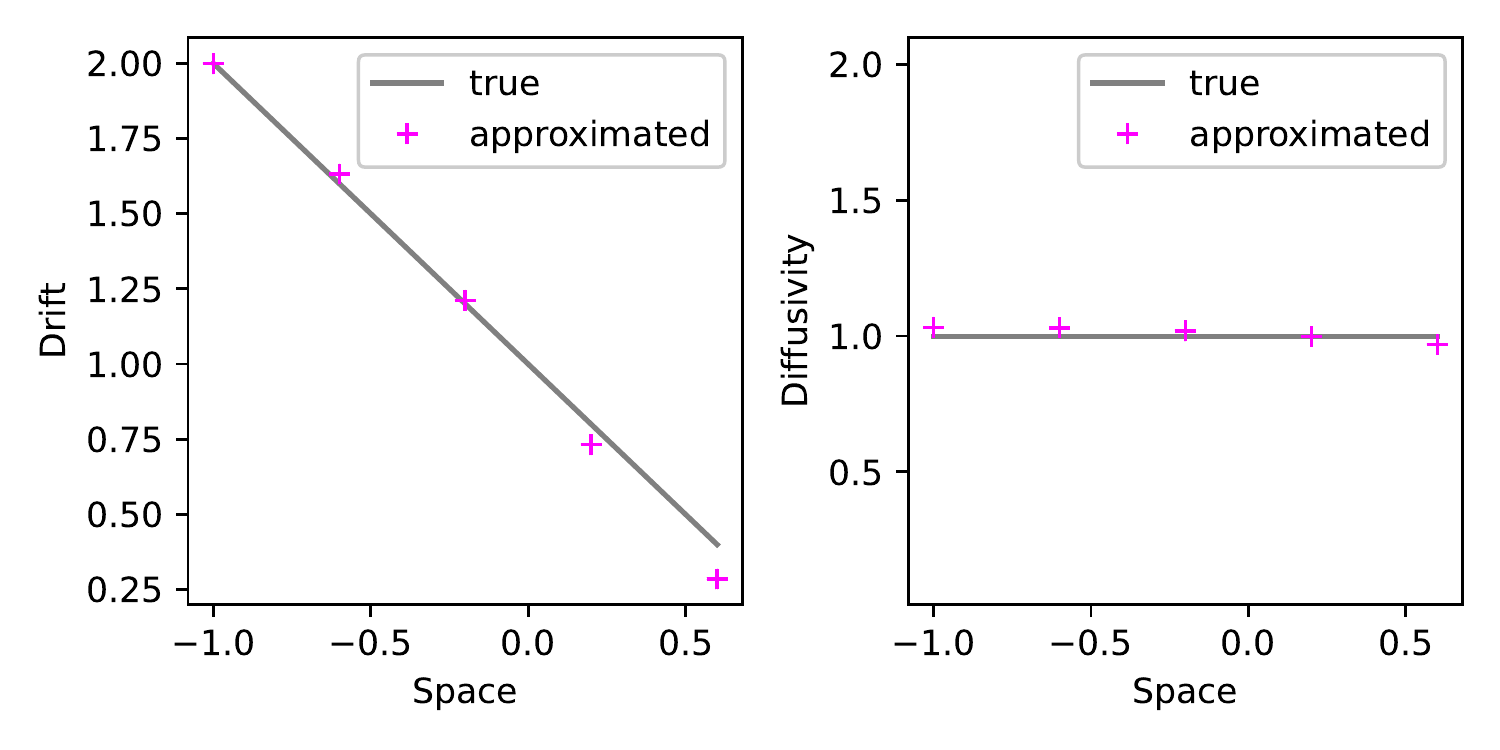}}
  \hspace{0.1in} 
  \subfigure[$f(X_t)=-X_t^3+X_t$, $g(X_t) = 1$, (left) true and approximated drift coefficients, (right) true and approximated diffusivity coefficients]{
    \label{fig:subfig:1-2-dw-add} 
    \includegraphics[scale=0.48]{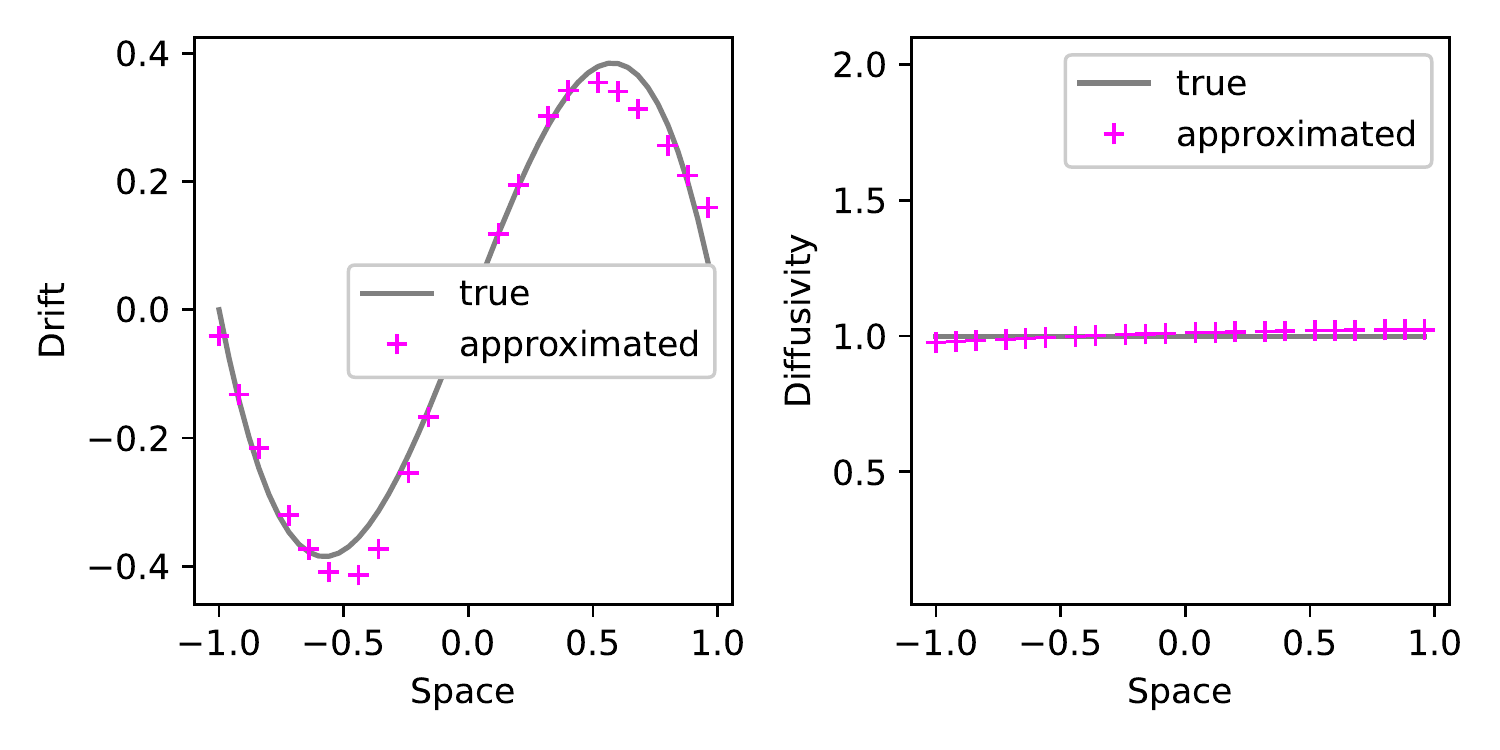}}
  \hspace{0.1in} 
  \subfigure[$f(X_t)= \log (X_t+1.5)-|X_t+1.5|^{1/3}$, $g(X_t) = 1$, (left) true and approximated drift coefficients, (right) true and approximated diffusivity coefficients]{
    \label{fig:subfig:1-2-complex-add} 
    \includegraphics[scale=0.48]{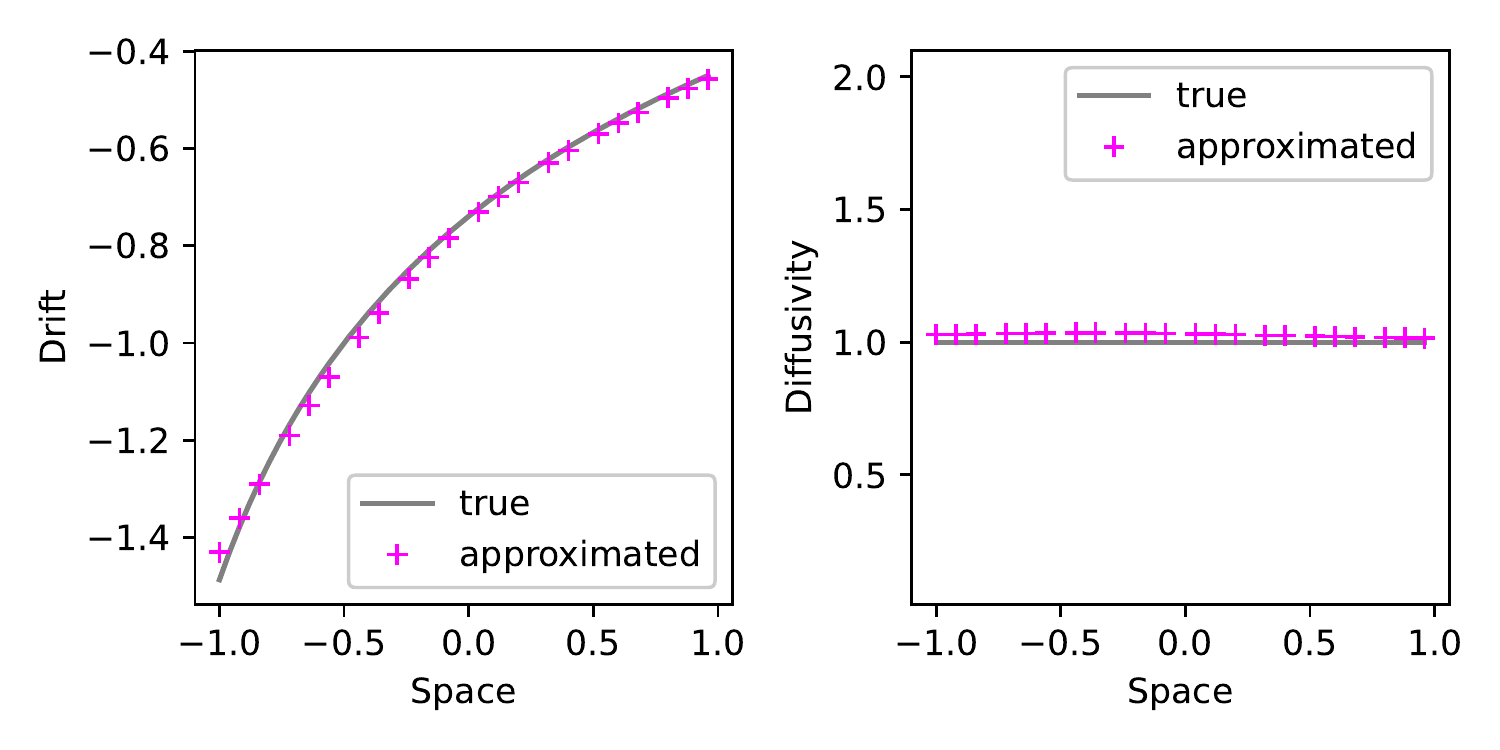}}
  \caption{Additive noise with different drift coefficients for $\alpha=1.5$, the gray lines represent the true drift coefficients and true diffusion coefficients, and the magenta plus markers are the estimated drift and diffusion results from our proposed neural networks.}
  \label{fig:1-2-add} 
\end{figure}

For $\mathbf{\alpha=0.5}$, hyperparameter settings are almost identical. The comparison results are shown in the Figure \ref{fig:0-1-add}.

\begin{figure}[H]
  \centering
  \subfigure[$f(X_t)=-X_t+1$, $g(X_t) = 1$, (left) true and approximated drift coefficients, (right) true and approximated diffusivity coefficients]{
    \label{fig:subfig:0-1-lin-add} 
    \includegraphics[scale=0.48]{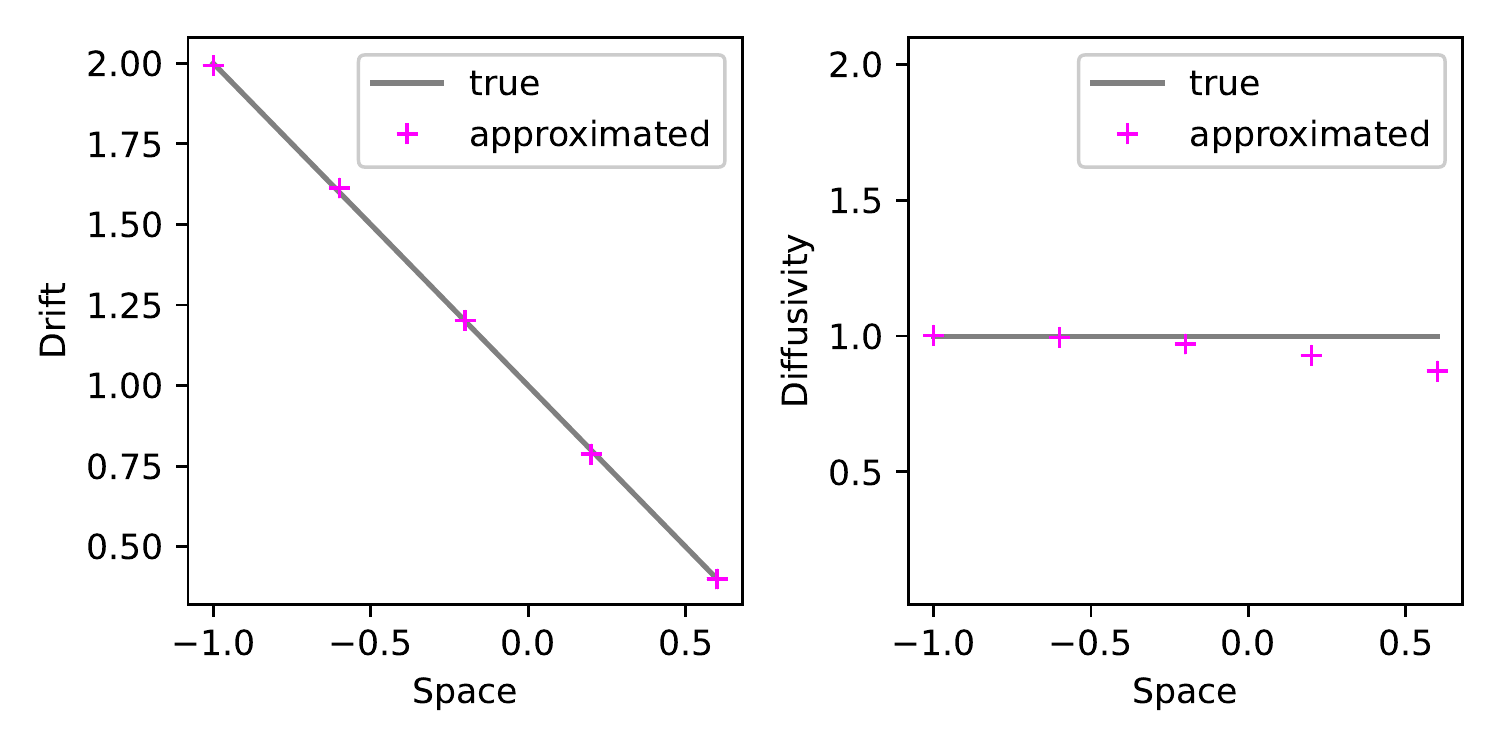}}
  \hspace{0.1in} 
  \subfigure[$f(X_t)=-X_t^3+X_t$, $g(X_t) = 1$, (left) true and approximated drift coefficients, (right) true and approximated diffusivity coefficients]{
    \label{fig:subfig:0-1-dw-add} 
    \includegraphics[scale=0.48]{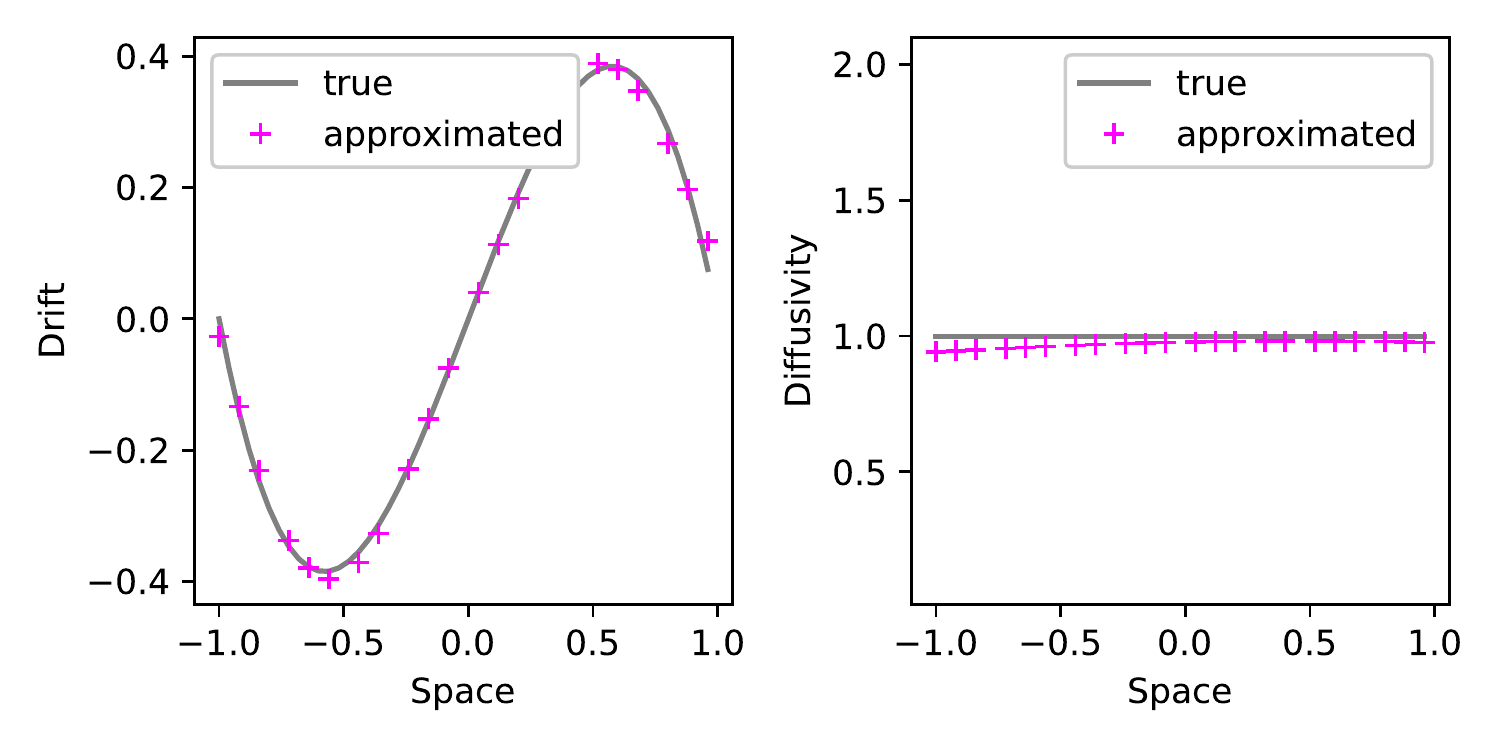}}
  \caption{Additive noise with different drift coefficients for $\alpha=0.5$, the gray lines represent the true drift coefficients and true diffusion coefficients, and the magenta plus markers are the estimated drift and diffusion results from our proposed neural networks.}
  \label{fig:0-1-add} 
\end{figure}

\textbf{Multiplicative noise}

For $\mathbf{\alpha=0.5}$, we test (1) $f(X_t) = -X_t+1$ and $g(X_t) = X_t + 1$, $N=5\times1000$, $h = 0.1$; (2) $f(X_t) = -X_t^3+X_t$ and $g(X_t) = X_t + 1$, $N=75\times1000$, $h = 0.5$. The comparison results are shown in the Figure \ref{fig:0-1-mul}.

\begin{figure}[H]
  \centering
  \subfigure[$f(X_t)=-X_t+1$, $g(X_t) = X_t+1$, (left) true and approximated drift coefficients, (right) true and approximated diffusivity coefficients]{
    \label{fig:subfig:0-1-dw-mul1} 
    \includegraphics[scale=0.48]{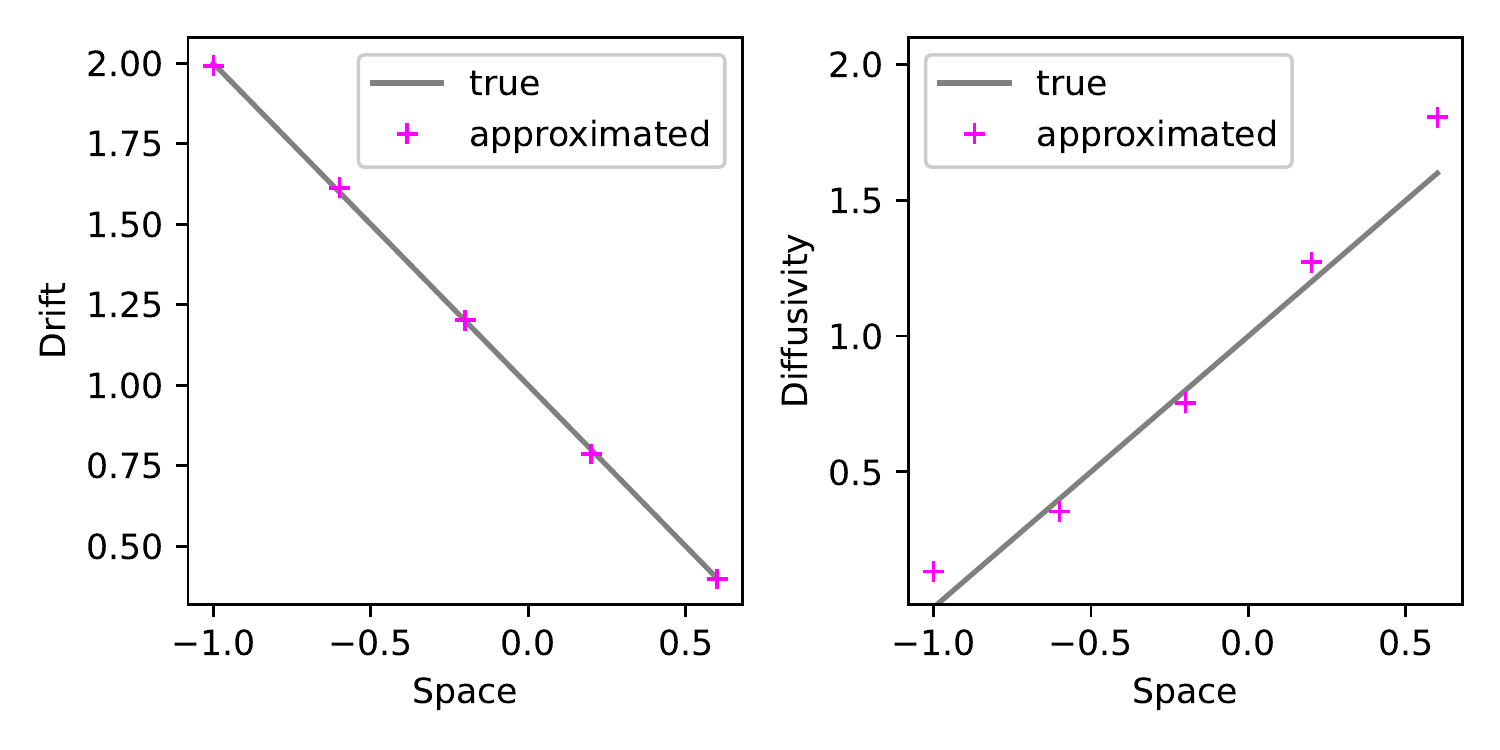}}
  \hspace{0.1in} 
  \subfigure[$f(X_t)=-X_t^3+X_t$, $g(X_t) = X_t + 1$, (left) true and approximated drift coefficients, (right) true and approximated diffusivity coefficients]{
    \label{fig:subfig:0-1-dw-mul2} 
    \includegraphics[scale=0.48]{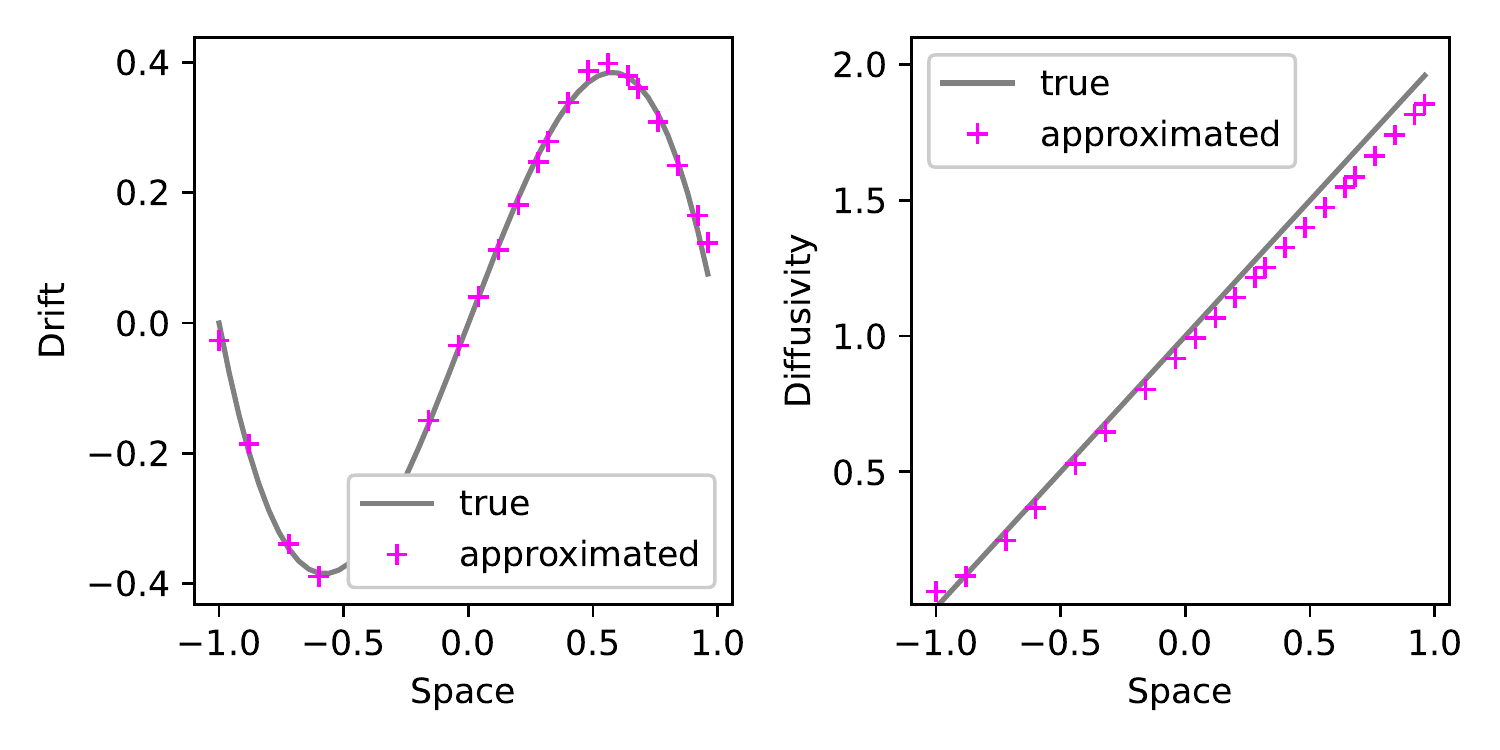}}
  \caption{Multiplicative noise with different diffusion coefficients for $\alpha=0.5$, the gray lines represent the true coefficients and the magenta plus markers are the neural networks' results.}
  \label{fig:0-1-mul} 
\end{figure}

\subsection{Another multi-dimensional experiment} \label{2d-multi}

In this experiment, the hyperparameter settings are consistent with section \ref{one dim levy experiments}. To demonstrate our capacity to learn multiplicative coupled noise, we examined the approximation of the following SDE:

\begin{equation} \label{2d-lin-lin}
    \begin{aligned}
        dX_t &= (X_t+Y_t)dt + (0.5Y_t+1) dL_{t,1}^\alpha ,\\
        dY_t &= (4X_t-2Y_t)dt + (0.5X_t+1) dL_{t,2}^\alpha,
    \end{aligned}
\end{equation}
where $L_{t,1}^\alpha$ and $L_{t,2}^\alpha$ are two independent one-dimensional $\alpha$-stable L\'evy motions. Here we take $\alpha=1.5$, $N=(5\ \text{different values of}\ x_0,\  y_0\ \text{respectively})\times 1000$, $h=0.5$, $(x_0,y_0)\in [-1,1] \times [-1,1]$. The true drift and diffusion coefficients, as well as the approximate results, are also presented as heat maps (Figure \ref{fig:l2d-in-lin MODEL}). As can be seen, our method is still workable.

\begin{figure}[H]
  \centering
  \subfigure[$f_1(X_t,Y_t)=X_t+Y_t$, (left) true drift coefficient, (right) approximated drift coefficient]{
    \label{fig:subfig:2d-lin-drift1} 
    \includegraphics[scale=0.38]{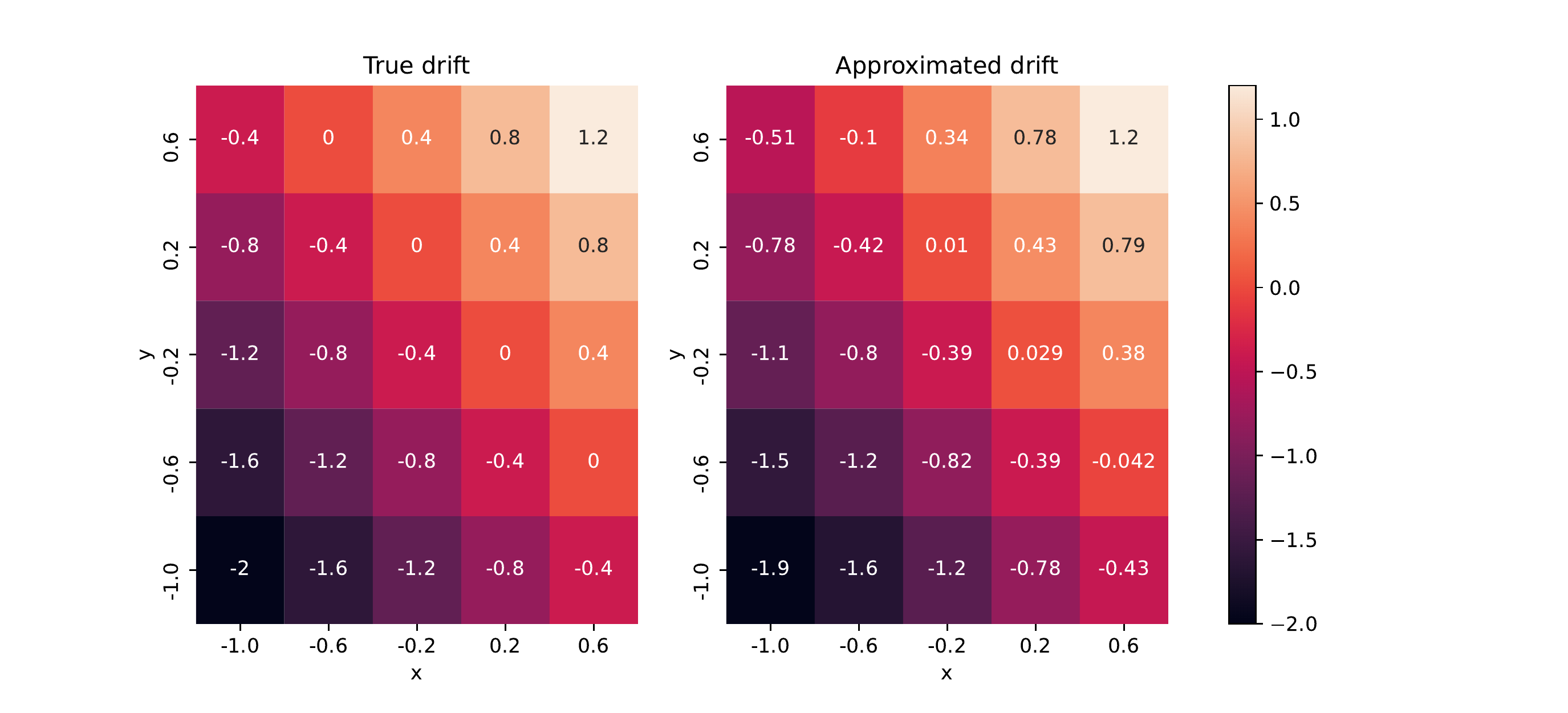}}
  \hspace{0.1in} 
  \subfigure[$f_2(X_t,Y_t)=4X_t-2Y_t$, (left) true drift coefficient, (right) approximated drift coefficient]{
    \label{fig:subfig:2d-lin-drift2} 
    \includegraphics[scale=0.38]{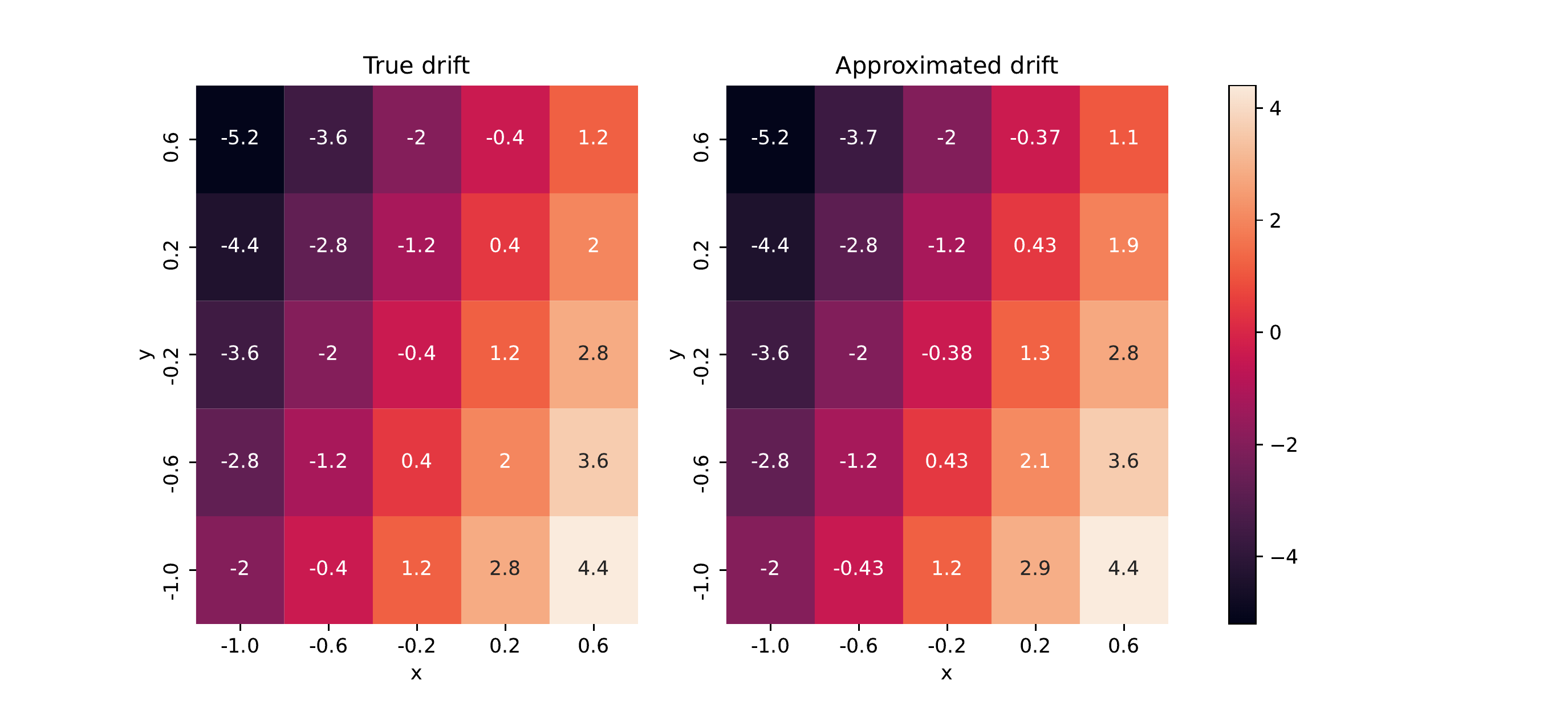}}
  \hspace{0.1in} 
  \subfigure[$g_1(X_t,Y_t)=0.5Y_t+1$, (left) true diffusivity coefficient, (right) approximated diffusivity coefficient]{
    \label{fig:subfig:2d-lin-diffusion1} 
    \includegraphics[scale=0.38]{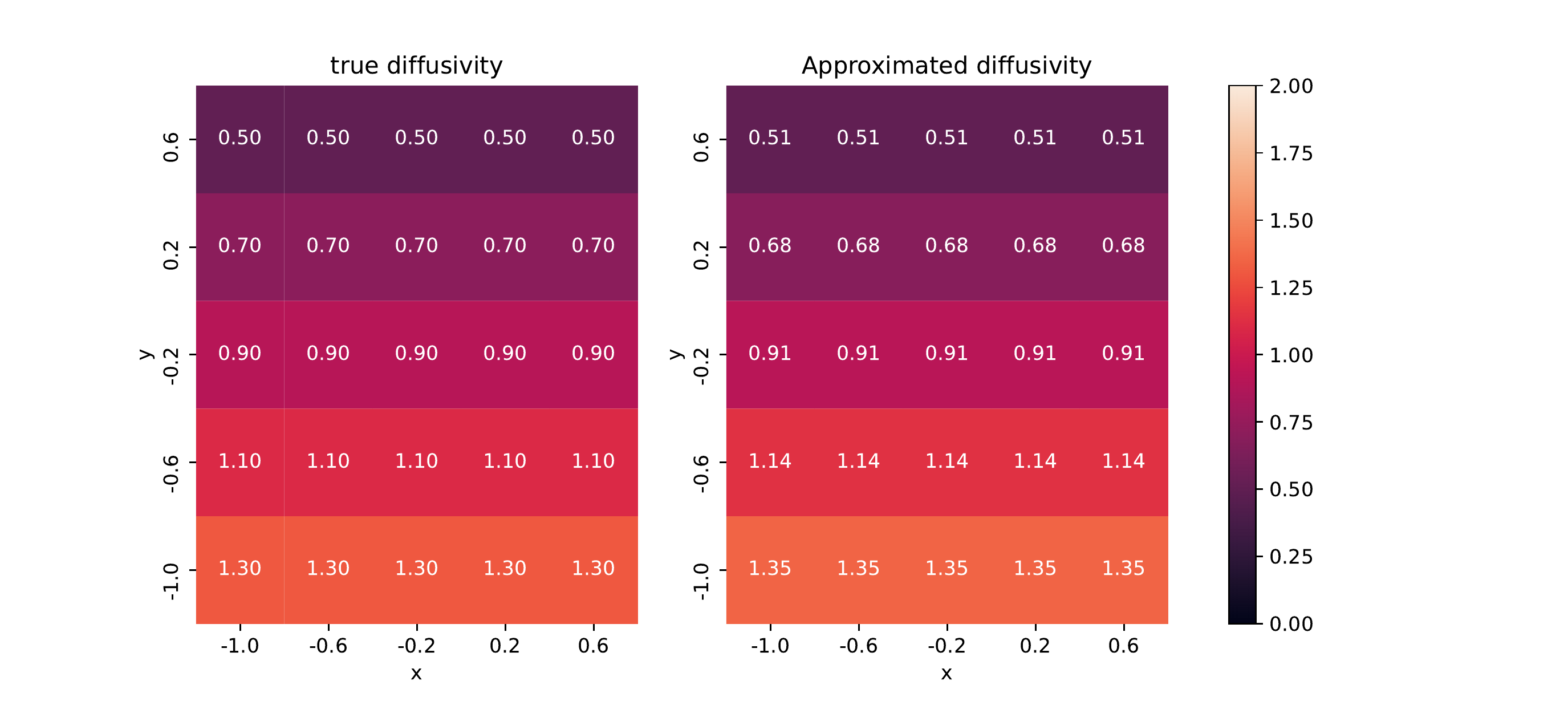}}
  \hspace{0.1in} 
  \subfigure[$g_2(X_t,Y_t)=0.5X_t+1$, (left) true diffusivity coefficient, (right) approximated diffusivity coefficient]{
    \label{fig:subfig:2d-lin-diffusion2} 
    \includegraphics[scale=0.38]{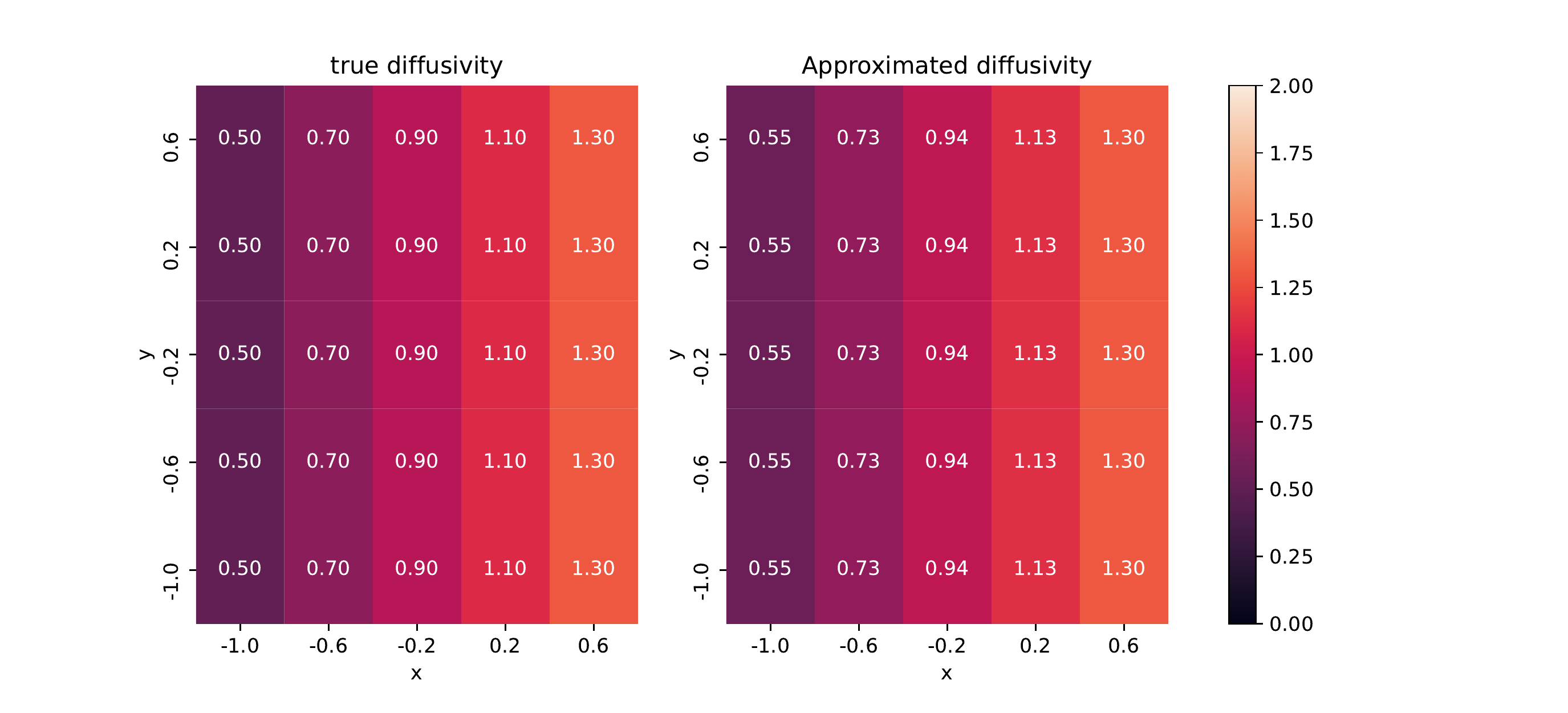}}
  \caption{The drift coefficients (true and estimated)  and diffusion coefficients (true and estimated) of the model (\ref{2d-lin-lin}), presented as heat maps.}
  \label{fig:l2d-in-lin MODEL} 
\end{figure}

\end{appendices}

\end{document}